\newcommand{\PreserveBackslash}[1]{\let\temp=\\#1\let\\=\temp}
\newcolumntype{C}[1]{>{\PreserveBackslash\centering}p{#1}}
\newcolumntype{R}[1]{>{\PreserveBackslash\raggedleft}p{#1}}
\newcolumntype{L}[1]{>{\PreserveBackslash\raggedright}p{#1}}
\newtheorem{proposition}{Proposition}
\newtheorem{proof}{Proof}
\title{ESAD: End-to-end Semi-supervised Anomaly Detection}
\begin{document}

\maketitle

\begin{abstract}
This paper explores semi-supervised anomaly detection, a more practical setting for anomaly detection where a small additional set of labeled samples are provided. We propose a new KL-divergence based objective function for semi-supervised anomaly detection, and show that two factors: the \emph{mutual information} between the data and latent representations, and the \emph{entropy} of latent representations, constitute an integral objective function for anomaly detection. To resolve the contradiction in simultaneously optimizing the two factors, we propose a novel encoder-decoder-encoder structure, with the first encoder focusing on optimizing the mutual information and the second encoder focusing on optimizing the entropy. The two encoders are enforced to share similar encoding with a consistent constraint on their latent representations. Extensive experiments have revealed that the proposed method significantly outperforms several state-of-the-arts on multiple benchmark datasets, including medical diagnosis and several classic anomaly detection benchmarks. 
\end{abstract}

\section{Introduction}
\label{sec:intro}
Anomaly detection (AD), with broad application in medical diagnosis \cite{tuluptceva2020anomaly}, credit card fraud detection \cite{phua2010comprehensive}, and autonomous driving \cite{eykholt2018robust}, has received significant attention among the machine learning community. The main challenge in AD is that, it is prohibitive, even if not impossible, to collect a representative set of anomalous samples due to its remarkable scarcity in the population. To bypass the challenge, many approaches \cite{Sabokrou2018Adversarially,perera2019ocgan,zhang2020p} have resorted to unsupervised learning so that only normal samples are needed for model training.

\begin{figure}[t]
\centering
\includegraphics[width=1.0\textwidth]{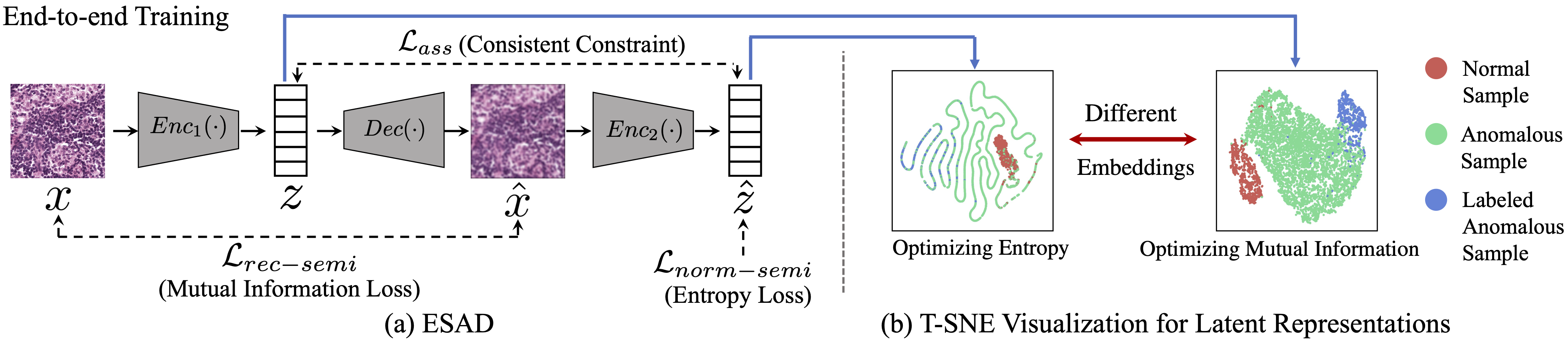}
\vspace{-18pt}
\caption{The training processes of ESAD for semi-supervised anomaly detection. (a) ESAD leverages an encoder-decoder-encoder structure, where the two encoders are enforced to share similar encoding with a consistent constraint on their latent representations, with the first encoder targeting to optimize the mutual information and the second encoder focusing on the entropy. (b) shows the T-SNE \cite{maaten2008visualizing} visualization results for the latent representations.}
\label{img:esad}
\vspace{-15pt}
\end{figure}

Semi-supervised anomaly detection, where a small set of labeled data are provided for training in addition to a large amount of unlabeled data, represents a more practical setting of anomaly detection. In the real-world scenario, it is feasible to obtain a small set of `biased' anomalous data. Earlier semi-supervised AD methods follow the unsupervised learning paradigm and employ such a labeled anomalous set through a certain form of regularization \cite{munoz2010semisupervised,gornitz2013toward}. More recently, Deep SAD~\cite{SAD}, the first deep semi-supervised AD method, builds upon the \emph{Infomax principle}~\cite{linsker1988self,bell1995information,hjelm2019learning} that maximizes the mutual information between the data and the latent representations and enforces an additional regularization on the latent representations. Due to the contradiction between the mutual information-based objective and entropy-based regularization, named model collapse in~\cite{ruff2018deep,SAD}, Deep SAD adopts a two-stage process: (i) autoencoder pre-training for mutual information maximization; and (ii) encoder fine-tuning for entropy regularization. This sequential learning process cannot guarantee the two objectives are simultaneously optimized and cannot well resolve the contradiction between the mutual information and entropy during the optimization. The model tends to collapse when the entropy is minimized to zero at the second stage, and the model inevitably leads to low mutual information as all data are mapped into a constant~\cite{ruff2018deep,SAD}.

In this paper, we introduce ESAD, an end-to-end method for semi-supervised anomaly detection. We start with exploring an alternative optimization target for AD by maximizing the KL-divergence between the normal and the anomalous class. Considering the challenge in estimating the anomalous distribution which results in the infeasibility of direct optimization, the KL-divergence based objective function is relaxed and further decomposed into two factors: (i) \emph{mutual information} between the data and the latent representations and (ii) \emph{entropy} of latent representations. While the two factors in the final objective function seem to be the same as those of Deep SAD, the difference lies in that, here mutual information and entropy are considered an integral part of the single objective function and need to be optimized simultaneously in an end-to-end training fashion.

In addition, to resolve the contradiction between the mutual information and entropy during optimization, we extend the autoencoder structure widely adopted for deep anomaly detection into an encoder-decoder-encoder structure illustrated in Figure~\ref{img:esad}~(a), where two separate but closely resembled encoders are employed to emphasize different factors in the optimization so that the model can be trained end-to-end.  
Specifically, although the two encoders are enforced to share similar encoding via a consistent constraint on the outputs, the first encoder focuses on mutual information through targeting on good representations only for the normal data but not for the labeled anomalous data, while the second encoder focuses on entropy by enforcing the compacted representations for the normal data and scattered representations for the anomalous data. With the encoder-decoder-encoder structure, we achieve end-to-end training for semi-supervised anomaly detection. Figure~\ref{img:esad}~(b) shows that the two encoders actually result in quite different embeddings, confirming the difficulty in finding a common embedding that simultaneously optimizing both mutual information and entropy. However, embeddings from both encoders show a better separation between the normal and anomalous classes than that of Deep SAD.

To validate the effectiveness of ESAD, we experiment with two medical image datasets~\cite{bejnordi2017diagnostic,wang2017chestx}, three natural image benchmarks~\cite{lecun1998mnist,xiao2017fashion,krizhevsky2009learning}, and several classic AD benchmarks \cite{Rayana2016}. Extensive results and analysis have shown that ESAD outperforms state-of-the-art methods on almost all datasets. Ablation studies are conducted to show the effectiveness of the proposed objectives and the encoder-decoder-encoder architecture for ESAD.

Our main contribution is summarized as follows: 
\begin{itemize}[leftmargin=*]
    \setlength\itemsep{0em}
    \vspace{-5pt}
    \item We introduce a KL-divergence based objective for semi-supervised anomaly detection and show that it can be relaxed and decomposed into mutual information and entropy related objectives, which formulates the AD objective with information-theoretical terms.
    \vspace{-3pt}
    \item To achieve end-to-end training, we propose an encoder-decoder-encoder architecture to simultaneously optimize the two contradictory factors, mutual information and entropy.
    \vspace{-3pt}
    \item The proposed method outperforms state-of-the-arts on multiple AD benchmarks.
\end{itemize}

\vspace{-16pt}
\section{Related Works}
\vspace{-5pt}
\noindent\textbf{Unsupervised Anomaly Detection.}
The vital challenge of unsupervised AD is that the training dataset contains only normal data. One-class classification based approaches tended to depict normal data directly with statistical approaches~\cite{Eskin2000Anomaly,scholkopf2001estimating,Xu2012Robust,Rahmani2017Coherence,ruff2018deep}. Self-supervised based approaches remedied the lack of supervision by introducing different self-supervisions, where the model was trained to optimize a self-supervised task, and then normal data can be separated with the assumption that anomalous data will perform differently. In this domain, reconstruction~\cite{Sakurada2014Anomaly,an2015variational,xia2015learning,nicolau2016hybrid,schlegl2017unsupervised,zong2018deep,deecke2018image,Sabokrou2018Adversarially,Akcay2018,gong2019memorizing} is the most popular self-supervision. Some approaches introduced other self-supervisions, \emph{e.g.}, \cite{golan2018deep} applied dozens of image geometric transforms for transformation classification, and \cite{fye2020ARNet} proposed a restoration framework to further improve the feature learning.

\noindent\textbf{Semi-supervised Anomaly Detection.}
Since classical semi-supervised approaches~\cite{kingma2014semi,rasmus2015semi,odena2016semi,dai2017good,oliver2018realistic} are inappropriate and hardly detect new and unknown anomalies due to the cluster assumption~\cite{chapelle2009semi}, many semi-supervised approaches are still grounded on the unsupervised learning paradigm~\cite{gornitz2013toward}. Along this line, Deep SSAD~\cite{gornitz2013toward} has been studied recently in specific contexts such as videos~\cite{Kiran2018An}, network intrusion detection~\cite{min2018ids}, or specific neural network architectures~\cite{ergen2017unsupervised}. Deep SAD~\cite{SAD}, a general method based on deep SVDD~\cite{ruff2018deep}, built upon the Infomax principle, where the training processes are consist of two stages. TLSAD~\cite{TLSAD} further consolidated the model's discriminative power with a transfer learning framework, which relied on an additional large-scale reference dataset for the model training.

\noindent\textbf{Anomaly Detection on Medical Images} 
is an important application but rarely considered in deep anomaly detection literature. \cite{zhou2020encoding} proposed P-Net for anomaly detection in retinal images by leveraging the specific relation between the image texture and the regular structure of retinal images, which is hard to generalize to other medical data. \cite{tuluptceva2020anomaly} relied on the classical autoencoder approach with a re-designed training pipeline to handle high-resolution, complex images. \cite{zhang2020viral} proposed a confidence-aware anomaly detection model for detecting viral pneumonia with in-house data. In this paper, we conduct experiments on some well-organized and open-source medical image datasets \cite{bejnordi2017diagnostic,wang2017chestx}.

\section{End-to-end Semi-supervised Anomaly Detection}

Given the input space $\mathcal{X}$ consisting of normal data $\mathcal{X}_N$ and anomalous data $\mathcal{X}_A$, where $\mathcal{X} = \mathcal{X}_N \cup \mathcal{X}_A$. For semi-supervised anomaly detection (AD), we are given $n$ unlabeled samples $\mathbf{x}^u_1, \cdots ,\mathbf{x}^u_n\in\mathcal{X}$ and $m$ labeled samples $(\mathbf{x}^l_1,y_1), \cdots , (\mathbf{x}^l_m, y_m) \in \mathcal{X} \times \mathcal{Y}$ with $\mathcal{Y} = \{-1,1\}$ where $y = 1$ denotes normal samples and $y = -1$ denotes anomalous samples. We assume $m \ll n$. Suppose the output space is $\mathcal{Z}$, the goal of AD is to find $f_\theta: \mathcal{X} \rightarrow \mathcal{Z}$, parameterized by $\theta$, that leads to the maximum distance between normal and anomalous data.

Targeting semi-supervised anomaly detection, we attempt to explore an objective function based on Kullback–Leibler (KL) divergence. Let $X$ and $Z$ be variables sampled from $\mathcal{X}$ and $\mathcal{Z}$, respectively.
Denote the joint distribution of data and latent representations for normal and anomalous data as $p_N(X, Z)$ and $p_A(X, Z)$, respectively, and the objective function for semi-supervised AD is then formulated as: $\max \limits_{\theta}~\operatorname{KL}\left[p_{N}(X,Z) \|p_{A}(X,Z) \right].$
Here $p_N(X, Z)$ can be approximately estimated using the labeled normal samples and the large numbers of unlabeled data, with the widely adopted assumption for AD that almost all unlabeled data are normal~\cite{gornitz2013toward,ruff2018deep,Sabokrou2018Adversarially,perera2019ocgan,zhang2020p,SAD}. On the contrary, it is impossible to estimate $p_A(X, Z)$ due to the extremely limited labeled instances. We here introduce another distribution, $p_{\tilde{A}}(X, Z)$, and reformulated the objective function as follows:
\begin{equation}\label{eq:KL obj}
\setlength{\abovedisplayskip}{2pt}
\setlength{\belowdisplayskip}{2pt}
\small
\max \limits_{\theta}~\operatorname{KL}\left[p_{N}(X,Z) \|p_{A}(X,Z) \right]-\operatorname{KL}\left[p_{\tilde{A}}(X,Z) \|p_{A}(X,Z) \right], 
\end{equation}
where $p_{\tilde{A}}(X, Z)$ can be estimated by the limited labeled anomalous data. With this objective function, we attempt to simultaneously (i) maximize KL divergence between the normal class and the anomalous class and (ii) minimize the KL divergence between the labeled anomalous class and the real anomalous class. 
Considering that it is impossible to estimate $p_A$,
we decompose the KL term $\operatorname{KL}\left[p_{N}(X,Z) \|p_{A}(X,Z) \right]$ as follows:
\begin{equation}
\small
\begin{split}\label{eq:LB_KL_reformulate}
\setlength{\abovedisplayskip}{2pt}
\setlength{\belowdisplayskip}{2pt}
&\operatorname{KL}\left[p_N(X, Z)|| p_A(X, Z) \right] = ~\mathbb{E}_{p_N(X, Z)}\left[ \log \frac{p_N(X, Z)}{p_A(X, Z)} \right]\\ 
=&~\mathbb{E}_{p_N(X, Z)} \left[\log( \frac{p_N(Z|X)}{p_N(Z)} \cdot {p_N(Z)} \cdot \frac{1} {p_A(Z|X)} \cdot \frac{p_N(X)}{p_A(X)}) \right]\\
=& ~\emph{I}(X_N,Z_N) - H(Z_N) +\mathbb{E}_{p_N(X)}\left[ H(p_N(Z|X),p_A(Z|X))\right]  + \operatorname{KL}\left[p_N({X})||p_A({X}) \right],
\end{split}
\end{equation}
where $I(\cdot, \cdot)$ is the mutual information, $H(\cdot)$ is the entropy, and $H(\cdot, \cdot)$ is the cross entropy.
With the non-negativity of the third and fourth terms (see the supplementary material for the proof), we get a lower bound to Eq.~\eqref{eq:LB_KL_reformulate}: $\operatorname{KL}\left[p_N(X, Z)|| p_A(X, Z) \right] \geq \emph{I}(X_N,Z_N) - \emph{H}(Z_N)$.
Similarly, $\operatorname{KL}\left[p_{\tilde{A}}(X,Z)~\|p_{A}(X,Z) \right]$ is approximated with $\emph{I}(X_{\scriptsize\tilde{A}},Z_{\tilde{A}})- \emph{H}(Z_{\tilde{A}})$.
The final objective function is thus formulated as:
\begin{equation}
\small
\setlength{\abovedisplayskip}{2pt}
\setlength{\belowdisplayskip}{2pt}
\begin{split}\label{eq:final}
\max \limits_{\theta}~\{[\emph{I}(X_N,Z_N)-\emph{I}(X_{\tilde{A}},Z_{\tilde{A}})] - [\emph{H}(Z_N)-\emph{H}(Z_{\tilde{A}})]\}.
\end{split}
\end{equation}
Note that this objective function is coincidentally similar to that of Deep SAD~\cite{SAD}, by optimizing on both the mutual information and entropy. However, the objective function here differs from~\cite{SAD} in that: (i) we start with a KL based formulation and derive equal weights for the mutual information and entropy, while for Deep SAD, the entropy is introduced as regularization with a coefficient $\beta$; (2) the mutual information for our paper involves different directions of optimizations for normal and anomalous data, while Deep SAD treats the normal and anomalous data the same in maximizing the mutual information. In our formulation, the optimizations of mutual information and entropy are integral parts of the single anomaly detection objective function and hence need to be optimized simultaneously.

\noindent \textbf{Architecture.} We follow~\cite{SAD} and employ an autoencoder to optimize the mutual information $I(X,Z)$. 
To resolve the contradiction between mutual information and entropy and achieve end-to-end training, different from the straightforward solution by directly introducing two independent encoders~\cite{TLSAD}, we propose to append an additional encoder to the autoencoder and introduce an encoder-decoder-encoder architecture, where the first encoder $Enc_1(\cdot)$ emphasizes mutual information optimization and the second encoder $Enc_2(\cdot)$ focuses on entropy optimization, and in the meanwhile, the two encoders are enforced to share similar encoding via a consistent constraint on their latent representations. The encoder-decoder-encoder architecture can be expressed as:
\begin{equation}\label{eq:pipeline}
\setlength{\abovedisplayskip}{2pt}
\setlength{\belowdisplayskip}{2pt}
\mathbf{z} =Enc_1(\mathbf{x}),~
\hat{\mathbf{x}}=Dec(\mathbf{z}),~
\hat{\mathbf{z}}=Enc_2(\hat{\mathbf{x}}),
\end{equation}                     
where $\hat{\mathbf{x}}$ is the output of the decoder, and $\mathbf{z}$ and $\hat{\mathbf{z}}$ are the latent representations from the first and second encoders, respectively. The wights for the two encoders are are not shared.

\noindent \textbf{Losses.} To optimize the two factors, \textit{i.e.}, mutual information and entropy, in Eq.~\eqref{eq:final}, we propose the corresponding losses as follows.

The optimization of mutual information is achieved with reconstruction or restoration~\cite{vincent2008extracting}. With unlabeled samples $\mathbf{x}^u_1, \cdots, \mathbf{x}^u_n$ and labeled samples $\mathbf{x}^l_1, \cdots, \mathbf{x}^l_m$, we want the autoencoder to well reconstruct the normal data but erroneously reconstruct the labeled anomalous data, thus the reconstruction likelihood is maximized for the normal data and minimized for the labeled anomalous data. A straight-forward loss definition for the anomalous data is the negative squared norm loss. However, due to its unbounded nature, it is expected to result in an ill-posed optimization problem and caused optimization to diverge~\cite{SAD}. We therefore introduce a transformation function $\phi(\cdot)$ on the input, forcing the network to reconstruct the anomalous data $\mathbf{x}$ to its transformation  $\phi(\mathbf{x})$, where $\phi(\mathbf{x}) \not= \mathbf{x}, \forall \mathbf{x} \in \mathcal{X}_A$. The transformation makes the network unable to correctly reconstruct the anomalous samples. The reconstruction loss is defined as follows:
\begin{equation}
\setlength{\abovedisplayskip}{2pt}
\setlength{\belowdisplayskip}{2pt}
\mathcal{L}_{rec-semi} = \frac{1}{n}\sum_{i = 1}^{n}\|\hat{\mathbf{x}}^u_i-\mathbf{x}^u_i\|^2 + \frac{1}{m}\sum_{j = 1}^{m}\|\hat{\mathbf{x}}^l_j-\Phi (\mathbf{x}^l_j)\|^2,
\end{equation}
where
$\Phi(\mathbf{x}^l_j) = \left\{\begin{matrix}
\mathbf{x}^l_j,  & \text{ if } ~y_j = 1,~~~\\ 
\phi(\mathbf{x}^l_j), & \text{ if } ~y_j = -1.
\end{matrix}\right.$
For the data which is functioned as a vector, $\phi(\mathbf{x}^l_j)$ can be a version adding Gaussian noise or a random permutation between various dimensions; for the image data, it can be a noised and rotated version of the original images. Besides the proposed transformation function, we also try another strategy, which enforces the model to reconstruct the labeled anomalous data to the normal data~\cite{perera2019ocgan}. But this task is too strict and difficult, especially for two types of samples that are quite different, which makes the model hard to converge.

For the entropy $H(Z)$, assuming $Z$ follows an isotropic Gaussian~\cite{cover2012elements}, $Z \sim N(\boldsymbol{\mu}, \sigma^{2} I)$ with $\sigma>0$, the entropy of $Z$ is proportional to its log-variance, \emph{i.e.}, $\emph{H}(Z) \propto \log \sigma^{2}$ (see the supplementary material for the proof). 
In this case, for $\mathbf{z}\sim p(Z)$, an $L_2$ norm can be used for the optimization of the entropy, since it minimizes the empirical variance and thus minimizes the entropy of a latent Gaussian. 
\begin{equation}
\setlength{\abovedisplayskip}{2pt}
\setlength{\belowdisplayskip}{2pt}
\mathcal{L}_{norm-semi} = \frac{1}{n}\sum_{i = 1}^{n}\|\hat{\mathbf{z}}^u_i\|_2 + \frac{1}{m}\sum_{j = 1}^{m}(\|\hat{\mathbf{z}}^l_j\|_2)^{y_j},
\end{equation}
where $y_j$$=-1$ for the labeled anomalous data while $y_j$$=1$ for the labeled normal data. This loss enforces the compacted representation for the normal data and scattered representation for the labeled anomalous data. 
Note that the inverse squared norm loss used for labeled anomalous data here is bounded from below and smooth, which are crucial properties for losses used in deep learning~\cite{Goodfellow2016}.
Compared with the SVDD loss in~\cite{SAD} where a pre-training process is necessary for initializing an additional hypersphere center, $\mathcal{L}_{norm-semi}$ does not need the pre-training, which indicates that the end-to-end training can be achieved. 

To define the consistency between the two encoders, similar to the assistant loss \cite{Akcay2018}, we resort to a consistent constraint between their corresponding latent representations:
\begin{equation}\label{eq:ass2}
\setlength{\abovedisplayskip}{2pt}
\setlength{\belowdisplayskip}{2pt}
    \mathcal{L}_{ass} = \frac{1}{n+m}\sum_{i = 1}^{n+m}\|\hat{\mathbf{z}}_i-\mathbf{z}_i\|^2.
\end{equation}

\noindent Finally, we define our training loss as follow:
\begin{equation}\label{eq:loss_final}
\setlength{\abovedisplayskip}{2pt}
\setlength{\belowdisplayskip}{2pt}
    \mathcal{L}_{semi} = \mathcal{L}_{rec-semi} + \lambda_1 \mathcal{L}_{norm-semi} + \lambda_2 \mathcal{L}_{ass},
\end{equation}
where $\lambda_1$ and $\lambda_2$ are two hyperparameters. We will further discuss the impacts of these two hyperparameters in the experiment section. To this end, we achieve end-to-end training for semi-supervised anomaly detection.

\noindent \textbf{Anomaly Score Measurement.}
We discuss how we calculate the anomaly score in the test phase. Since both the mutual information and the entropy are related to the performance of anomaly detection, we use both $\mathcal{L}_{rec-semi}$ and $\mathcal{L}_{norm-semi}$ to measure the anomaly score for the given samples, which are related to the mutual information and the entropy, respectively. 
We calculate the reconstruction error of each input sample $\mathbf{x}$ and the value of $L_2$ norm for its representation $\hat{\mathbf{z}}$ for anomaly detection. The anomaly score is formulated as:
\begin{equation}
\setlength{\abovedisplayskip}{2pt}
\setlength{\belowdisplayskip}{2pt}
    \mathcal{S}_{test} = \|\hat{\mathbf{x}}-\mathbf{x}\|^2 + \lambda_1 \|\hat{\mathbf{z}}\|_2,
\end{equation}
where $\lambda_1$ is the same as the setting in the training process. We will further discuss the impact of $\lambda_1$ in Section~\ref{sec45}.
To the best of our knowledge, it is the first time considering both the terms of the mutual information and the entropy for the anomaly score measurement. On the contrary, most one-class classification based methods, \emph{e.g.}, OC-SVM~\cite{scholkopf2001estimating}, only consider the term of the entropy. Similarly, Deep SVDD~\cite{ruff2018deep}, Deep SAD~\cite{SAD} and TLSAD~\cite{TLSAD} also consider only the term of the entropy, since they only use the SVDD loss as the final anomaly score. Most reconstruction based methods or restoration based methods, including the vanilla AE~\cite{masci2011stacked} and ARNet~\cite{fye2020ARNet}, only consider the term of mutual information, since they only use the reconstruction or restoration loss as the anomaly score. Results show that considering both of the two terms significantly improves the performance of anomaly detection.

\section{Experiments}
In this section, we conduct substantial experiments to validate our method. The ESAD is first evaluated on multiple AD benchmark datasets, comparing with several state-of-the-arts. Then we present the respective effects of different designs through ablation study. Finally, we visualize the latent representations of ESAD through T-SNE.

\subsection{Experimental Setups}
\noindent\textbf{Datasets.}
We conduct semi-supervised anomaly detection experiments on three popular natural image datasets MNIST~\cite{lecun1998mnist}, Fashion-MNIST~\cite{xiao2017fashion} and CIFAR-10~\cite{krizhevsky2009learning}, together with six non-image classic AD datasets~\cite{Rayana2016}, all following the settings in~\cite{SAD}. 
To validate our method on real-world AD scenarios, \emph{i.e.}, with higher resolution and with more complex anomalies, we additionally conduct experiments on two medical image datasets Camelyon16~\cite{bejnordi2017diagnostic} and the NIH dataset~\cite{wang2017chestx}. For all datasets, the training and testing partitions remain as default. More details are shown in the supplementary material.

\noindent\textbf{Evaluation Protocol.} 
We quantify the model performance using the area under the Receiver Operating Characteristic (ROC) curve metric (AUC). It is commonly adopted as performance measurement in anomaly detection (AD) tasks.

\noindent\textbf{Model Configuration.}
For ESAD, the architecture of the autoencoder and the data preprocessing for the image dataset is aligned with~\cite{fye2020ARNet}. Different from Deep SAD, which uses different networks for each dataset, ESAD uses the same autoencoder network since it is robust enough. For non-image classic AD datasets, we use the autoencoder network aligned with~\cite{SAD}. The hyperparameter $\lambda_1$ and $\lambda_2$ are set to 1 as default. We give the full details in the supplementary material.

\renewcommand \arraystretch{0.7}
\begin{table}[t]
\centering
\caption{Results of anomaly detection on natural image datasets, where we increase the ratio of labeled anomalies $\gamma_{l}$ in the training set. We report the avg. AUC in \% with st. dev. computed over 90 experiments at various $\gamma_{l}$. Results of SSAD Hybrid, SS-DGM and Deep SAD are borrowed from~\cite{SAD}. Results of TLSAD are borrowed from~\cite{TLSAD}.}
\label{tal:1}
\footnotesize
\setlength{\tabcolsep}{2.5pt}{
\begin{tabular}{C{1.3cm}C{0.6cm}C{2.2cm}C{1.7cm}C{1.8cm}C{1.1cm}C{1.6cm}}
\toprule
Data & $\gamma_l$ & \makecell[c]{SSAD Hybrid\\~\cite{gornitz2013toward}} & \makecell[c]{SS-DGM\\~\cite{kingma2014semi}} & \makecell[c]{Deep SAD\\~\cite{SAD}} & \makecell[c]{TLSAD\\~\cite{TLSAD}} & \makecell[c]{ESAD\\ (ours)}\\
\cmidrule(lr){1-1} \cmidrule(lr){2-2} \cmidrule(lr){3-3} \cmidrule(lr){4-4} \cmidrule(lr){5-5} \cmidrule(lr){6-6} \cmidrule(lr){7-7}
& .00 & 96.3 $\pm$ 2.5 & - & 92.8 $\pm$ 4.9 & - & \textbf{98.5} $\pm$ \textbf{1.3}\\
 & .01 & 96.8 $\pm$ 2.3 & 89.9 $\pm$ 9.2 & 96.4 $\pm$ 2.7 & 94.1 & \textbf{99.2} $\pm$ \textbf{0.7} \\
MNIST  & .05 & 97.4 $\pm$ 2.0 & 92.2 $\pm$ 5.6 & 96.7 $\pm$ 2.4 & 96.9 & \textbf{99.4} $\pm$ \textbf{0.3}\\
 & .10 & 97.6 $\pm$ 1.7 & 91.6 $\pm$ 5.5 & 96.9 $\pm$ 2.3 & 97.7 & \textbf{99.5} $\pm$ \textbf{0.4} \\
 & .20 & 97.8 $\pm$ 1.5 & 91.2 $\pm$ 5.6 & 96.9 $\pm$ 2.4 & 98.3 & \textbf{99.6} $\pm$ \textbf{0.3} \\
\cmidrule(lr){1-1} \cmidrule(lr){2-2} \cmidrule(lr){3-3} \cmidrule(lr){4-4} \cmidrule(lr){5-5} \cmidrule(lr){6-6} \cmidrule(lr){7-7}
 & .00 & 91.2 $\pm$ 4.7 & & 89.2 $\pm$ 6.2 & - & \textbf{94.0} $\pm$ \textbf{4.5}\\
 & .01 & 89.4 $\pm$ 6.0 & 65.1 $\pm$ 16.3 & 90.0 $\pm$ 6.4 & 88.4 & \textbf{95.3} $\pm$ \textbf{4.2}\\
F-MNIST  & .05 & 90.5 $\pm$ 5.9 & 71.4 $\pm$ 12.7 & 90.5 $\pm$ 6.5 & 91.4 & \textbf{95.6} $\pm$ \textbf{4.1}\\
 & .10 & 91.0 $\pm$ 5.6 & 72.9 $\pm$ 12.2 & 91.3 $\pm$ 6.0 & 92.0 & \textbf{95.8} $\pm$ \textbf{4.0}\\
 & .20 & 89.7 $\pm$ 6.6 & 74.7 $\pm$ 13.5 & 91.0 $\pm$ 5.5 & 93.2 & \textbf{95.9} $\pm$ \textbf{4.0}\\
\cmidrule(lr){1-1} \cmidrule(lr){2-2} \cmidrule(lr){3-3} \cmidrule(lr){4-4} \cmidrule(lr){5-5} \cmidrule(lr){6-6} \cmidrule(lr){7-7}
 & .00 & 63.8 $\pm$ 9.0 & & 60.9 $\pm$ 9.4 & - & \textbf{78.8} $\pm$ \textbf{6.5}\\
  & .01 & 70.5 $\pm$ 8.3 & 49.7 $\pm$ 1.7 & 72.6 $\pm$ 7.4 & 74.4 & \textbf{83.7} $\pm$ \textbf{6.4}\\
 CIFAR-10 & .05 & 73.3 $\pm$ 8.4 & 50.8 $\pm$ 4.7 & 77.9 $\pm$ 7.2 & 80.0 & \textbf{86.9} $\pm$ \textbf{6.8} \\
 & .10 & 74.0 $\pm$ 8.1 & 52.0 $\pm$ 5.5 & 79.8 $\pm$ 7.1 & 84.8 & \textbf{87.8} $\pm$ \textbf{6.7} \\
 & .20 & 74.5 $\pm$ 8.0 & 53.2 $\pm$ 6.7 & 81.9 $\pm$ 7.0 & 86.3 & \textbf{88.5} $\pm$ \textbf{6.9} \\
\bottomrule
\end{tabular}}
\vspace{-12pt}
\end{table}

\subsection{Experiments on Natural Images}
\vspace{-3pt}
\noindent\textbf{Competing Methods.}
We consider several semi-supervised anomaly detection state-of-the-arts, including SSAD~\cite{gornitz2013toward}, SS-DGM~\cite{kingma2014semi}, Deep SAD~\cite{SAD} and TLSAD~\cite{TLSAD} as baselines.
Following~\cite{SAD}, as~\cite{gornitz2013toward} is sensitive to hyperparameters, SSAD Hybrid here uses a subset $(10\%)$ of the test set for hyperparameter selection to establish a strong baseline.
More details for these baseline methods are shown in the supplementary material.

\noindent\textbf{Experiment Settings.}
For a dataset with $C$ classes, we conduct a batch of $C$ experiments respectively with each of the $C$ classes set as the normal class once. We then evaluate performance on an independent test set, which contains samples from all classes, including normal and anomalous data. 

\noindent\textbf{Comparison with State-of-the-art Methods.}
The effectiveness of adding labeled anomalies during training is investigated. By adding labeled anomalous samples $\mathbf{x}_{1}, \ldots, \mathbf{x}_{m}$ to the training set, we increase the ratio of labeled training data $\gamma_{l}=m /(n+m)$. The number of anomaly classes included in the labeled training data is set as 1, \emph{i.e.}, there are still eight unseen classes at testing time. We iterate this training set generation process and report the average results over the ten kinds of normal classes $\times$ nine labeled anomalous classes, \emph{i.e.}, over 90 experiments per labeled ratio $\gamma_{l}$. The corresponding results are shown in Table~\ref{tal:1}. On all involved datasets, results present that the average AUCs of ESAD outperform all other methods, including TLSAD which utilizes a large-scale additional dataset (ImageNet~\cite{russakovsky2015imagenet}) as the reference data for the model training. Results when $\gamma_l > 0$ are much better than the results when $\gamma_l = 0$, showing the effectiveness of the semi-supervised training scheme.

\subsection{Experiments on Medical Images}\label{sec:medical}
Medical images, such as H\&E stained images, X-ray, etc., have extremely high resolution compared to natural images. In addition, the patient's lesions may only occupy a small part of the entire image, which brings great challenges to AD. To validate the AD performance of ESAD on real-world AD scenarios, we examined two challenging medical problems with different image characteristics and abnormality appearance, \emph{i.e.}, Camelyon16 \cite{bejnordi2017diagnostic} and chest X-rays in NIH~\cite{wang2017chestx}. 
We consider several state-of-the-arts, including DAOL \cite{tang2019deep}, DGEO \cite{golan2018deep}, PIAD \cite{tuluptceva2019perceptual}, DIF \cite{ouardini2019towards} Deep SAD \cite{SAD}, and DPA \cite{tuluptceva2020anomaly}. Note that for
\cite{golan2018deep,tuluptceva2019perceptual,ouardini2019towards}, anomalous samples in the training set are used for the validation. We re-train Deep SAD \cite{SAD} with the same encoder and decoder network as ESAD to obtain a better baseline.

\renewcommand \arraystretch{0.95}
\begin{table}[t]
\centering
\caption{Performance of anomaly detection methods on medical image datasets. We report the avg. AUC in \% with st. dev. computed over 3 runs.}
\label{tal:medical}
\footnotesize
\setlength{\tabcolsep}{7.5pt}{
\begin{tabular}{C{2.1cm}C{1.7cm}C{1.7cm}C{1.7cm}C{1.7cm}}
\toprule
Method & Cam.16 & NIH (a sub.) & NIH (PA) & NIH (AP)\\
\cmidrule(lr){1-1} \cmidrule(lr){2-2} \cmidrule(lr){3-3} \cmidrule(lr){4-4} \cmidrule(lr){5-5}
DAOL \cite{tang2019deep} & - & 80.5 $\pm$ 2.1 & - & -\\
DGEO \cite{golan2018deep} & 45.9 $\pm$ 2.1 & 85.3 $\pm$ 1.0 & 63.6 $\pm$ 0.6 & 54.4 $\pm$ 0.6\\
PIAD \cite{tuluptceva2019perceptual} & 89.5 $\pm$ 0.6 & 87.3 $\pm$ 0.9 & 68.7 $\pm$ 0.5 & 58.6 $\pm$ 0.3\\
DIF \cite{ouardini2019towards} & 90.6 $\pm$ 0.3 & 85.3 $\pm$ 0.4 & 47.2 $\pm$ 0.4 & 56.1 $\pm$ 0.2\\
Deep SAD \cite{SAD} & 92.1 $\pm$ 0.4 & 90.9 $\pm$ 0.2 & 51.9 $\pm$ 0.8 & 59.8 $\pm$ 0.1\\
DPA \cite{tuluptceva2020anomaly} & 93.4 $\pm$ 0.3 & 92.6 $\pm$ 0.2 & \textbf{70.8 $\pm$ 0.1} & 58.5 $\pm$ 0.0\\
ESAD (ours) & \textbf{96.8 $\pm$ 0.4} & \textbf{94.6 $\pm$ 0.4} & 68.9 $\pm$ 0.2 & \textbf{60.1 $\pm$ 0.2}\\
\bottomrule
\end{tabular}}
\vspace{-8pt}
\end{table}

\renewcommand \arraystretch{0.95}
\begin{table*}[t]
\centering
\caption{Results on classic anomaly detection benchmark datasets with a ratio of labeled anomalies of $\gamma_{l}=0.01$. We report the avg. AUC in \% with st. dev. computed over 10 seeds.}
\label{tal:4}
\footnotesize
\setlength{\tabcolsep}{7.7pt}{
\begin{tabular}{C{1.4cm}C{1.5cm}C{1.6cm}C{1.5cm}C{1.4cm}C{1.4cm}}
\toprule
Data & \makecell[c]{Deep\\ SVDD~\cite{ruff2018deep}} & \makecell[c]{SSAD\\ Hybrid~\cite{gornitz2013toward}} & \makecell[c]{SS-DGM\\\cite{kingma2014semi}} & \makecell[c]{Deep\\ SAD~\cite{SAD}} & \makecell[c]{ESAD \\ (ours)}\\
\cmidrule(lr){1-1} \cmidrule(lr){2-2} \cmidrule(lr){3-3} \cmidrule(lr){4-4} \cmidrule(lr){5-5} \cmidrule(lr){6-6} 
arrhythmia & 74.6 $\pm$ 9.0 & 78.3 $\pm$ 5.1 & 50.3 $\pm$ 9.8 & 75.9 $\pm$ 8.7 & \textbf{85.2} $\pm$ \textbf{2.9}\\
cardio & 84.8 $\pm$ 3.6 & 86.3 $\pm$ 5.8 & 66.2 $\pm$ 14.3 & 95.0 $\pm$ 1.6 & \textbf{98.8} $\pm$ \textbf{0.5}\\
satellite & 79.8 $\pm$ 4.1 & 86.9 $\pm$ 2.8 & 57.4 $\pm$ 6.4 & 91.5 $\pm$ 1.1 & \textbf{92.5} $\pm$ \textbf{0.7}\\
satimage-2 & 98.3 $\pm$ 1.4 & 96.8 $\pm$ 2.1 & 99.2 $\pm$ 0.6 & \textbf{99.9} $\pm$ \textbf{0.1} & \textbf{99.9} $\pm$ \textbf{0.1}\\
shuttle & 86.3 $\pm$ 7.5 & 97.7 $\pm$ 1.0 & 97.9 $\pm$ 0.3 & 98.4 $\pm$ 0.9 & \textbf{99.1} $\pm$ \textbf{1.1}\\
thyroid & 72.0 $\pm$ 9.7 & 95.3 $\pm$ 3.1 & 72.7 $\pm$ 12.0 & 98.6 $\pm$ 0.9 & \textbf{99.6} $\pm$ \textbf{0.2}\\
\bottomrule
\end{tabular}}
\vspace{-10pt}
\end{table*}

\noindent\textbf{Anomaly Detection on Chest X-Rays.} NIH images without any disease marker were considered normal. Pulmonary and cardiac abnormalities in this dataset are all considered anomalous. Following~\cite{tang2019deep,tuluptceva2020anomaly}, we split the dataset into two sub-datasets having only posteroanterior (PA) or anteroposterior (AP) projections. The labeled anomalous samples contain only the most frequent disease (`Infiltration') out of fourteen possibilities and there are still thirteen unseen possibilities of anomalies in the test set. We also experiment on a subset containing clearer normal/anomalous cases \cite{tang2019deep}. Default preprocessing of chest X-rays involved a $768\times 768$ central crop and resize to $64\times 64$. As shown in Table~\ref{tal:medical}, the anomaly detection performance of ESAD outperforms all state-of-the-arts on the clearer subset \cite{tang2019deep} and AP subset. DPA~\cite{tuluptceva2020anomaly} performs better than ESAD on the subset of PA. Note that DPA uses a higher resolution version of the images ($256 \times 256$) for validation, so it has a greater advantage compared with other methods.

\noindent\textbf{Metastases Detection in Digital Pathology.}
For the Camelyon16 Challenge~\cite{bejnordi2017diagnostic}, we sample the Vahadane-normalized~\cite{vahadane2016structure} $64\times 64$ tiles from the fully normal slides with magnification of $10\times$, and treat these as normal. Tiles with metastases are treated as anomalous. It contains 7612 normal and 200 anomalous training images, and 4000 (normal) + 817 (anomalous) images for the test. As shown in Table~\ref{tal:medical}, the anomaly detection performance of ESAD outperforms all state-of-the-art methods. 

\subsection{Experiments on Classic Anomaly Detection Benchmark Datasets}
We examine the performance of the various methods on well-established classic AD benchmark datasets \cite{Rayana2016} with $\gamma_{l}=0.01$. Networks of both the encoder and the decoder are aligned with~\cite{SAD}. The corresponding results are shown in Table~\ref{tal:4}. Comparing with other state-of-the-arts, ESAD shows the highest AUCs and stability. It shows that unlike other deep approaches~\cite{ergen2017unsupervised,Kiran2018An,min2018ids,deecke2018image,golan2018deep}, ESAD is not domain or data-type specific. 

\renewcommand \arraystretch{0.75}
\begin{table}[t]
\centering
\caption{Ablation study on different designs of architecture and loss functions for ESAD on two different datasets. We report the avg. AUC in \% with st. dev. computed over 10 seeds.}
\label{tal:ablation}
\footnotesize
\setlength{\tabcolsep}{2.5pt}{
\begin{tabular}{C{3.0cm}|C{1.5cm}C{1.5cm}C{1.5cm}|C{1.5cm}C{1.5cm}}
\toprule
\multirow{2}{*}{Architecture} & \multicolumn{3}{c|}{Loss Functions} & \multicolumn{2}{c}{\multirow{2}{*}{Dataset}}\\
\cmidrule(lr){2-4}
& \multirow{2}{*}{$\mathcal{L}_{ass}$} & \multicolumn{2}{c|}{$\mathcal{L}_{rec-semi}$} & \multicolumn{2}{c}{}\\ 
\cmidrule(lr){1-1} \cmidrule(lr){3-4} \cmidrule(lr){5-6}
Encoder-decoder-encoder & & Gaussian & Permutation & satellite & \multicolumn{1}{c}{cardio} \\
\cmidrule(lr){1-1} \cmidrule(lr){2-2} \cmidrule(lr){3-3} \cmidrule(lr){4-4}
\cmidrule(lr){5-5} \cmidrule(lr){6-6}
\ding{55}&\ding{55}&\ding{55}&\ding{55}&87.9 $\pm$ 1.7 & 96.5 $\pm$ 1.1\\
\checkmark &\ding{55}&\ding{55}&\ding{55} & 90.0 $\pm$ 1.2 & 97.2 $\pm$ 0.9\\
\checkmark &\checkmark&\ding{55}&\ding{55} & 90.4 $\pm$ 1.1 & 97.9 $\pm$ 1.0\\
\checkmark &\checkmark&\checkmark &\ding{55}& 92.0 $\pm$ 1.1 & 98.2 $\pm$ 0.6\\
\checkmark &\checkmark&\ding{55}&\checkmark& 92.5 $\pm$ 1.0 & 98.6 $\pm$ 0.6\\
\checkmark &\checkmark&\checkmark & \checkmark & 92.5 $\pm$ 0.7 & 98.8 $\pm$ 0.5\\
\bottomrule
\end{tabular}}
\vspace{-5pt}
\end{table}

\renewcommand \arraystretch{0.75}
\begin{table}[t]
\centering
\caption{Ablation study on shallow and deep networks, for both the encoder and the decoder. `Shallow' is a LeNet-type network utilized in~\cite{SAD}. `Deep' is the network utilized in ESAD. We report the avg. AUC in \% with st. dev. computed over 90 experiments at various $\gamma_{l}$ on F-MNIST. Results with * are lower than expected because of the model collapse problem for Deep SAD under the small labeled anomalies ratio.}
\label{tal:backbone}
\footnotesize
\setlength{\tabcolsep}{1.5pt}{
\begin{tabular}{C{1.4cm}C{2.3cm}C{1.6cm}C{1.6cm}C{1.6cm}C{1.6cm}C{1.6cm}}
\toprule
Network & Method & $\gamma_l=0.0$ & $\gamma_l=0.01$ &$\gamma_l=0.05$ &$\gamma_l=0.1$ & $\gamma_l=0.2$ \\
\cmidrule(lr){1-1} \cmidrule(lr){2-2} \cmidrule(lr){3-3} \cmidrule(lr){4-4} \cmidrule(lr){5-5} \cmidrule(lr){6-6} \cmidrule(lr){7-7}
\multirow{2}{*}{Shallow} & \makecell[c]{Deep SAD~\cite{SAD}} &
89.2 $\pm$ 6.2 & 
90.0 $\pm$ 6.4 &
90.5 $\pm$ 6.5 &
91.3 $\pm$ 6.0 & 
91.0 $\pm$ 5.5 \\
& \makecell[c]{ESAD (ours)} &
93.6 $\pm$ 4.5 & 
94.9 $\pm$ 4.2 &
95.3 $\pm$ 4.2 &
95.4 $\pm$ 4.1 & 
95.5 $\pm$ 4.1 \\
\cmidrule(lr){1-1} \cmidrule(lr){2-2} \cmidrule(lr){3-3} \cmidrule(lr){4-4} \cmidrule(lr){5-5} \cmidrule(lr){6-6} \cmidrule(lr){7-7}
\multirow{2}{*}{Deep} & \makecell[c]{Deep SAD~\cite{SAD}} &
72.5 $\pm$ 7.0* & 
87.0 $\pm$ 8.7* &
90.3 $\pm$ 6.4 &
91.8 $\pm$ 7.7 & 
92.0 $\pm$ 7.0 \\
& \makecell[c]{ESAD (ours)} &
94.0 $\pm$ 4.5 & 
95.3 $\pm$ 4.2 &
95.6 $\pm$ 4.1 &
95.8 $\pm$ 4.0 & 
95.9 $\pm$ 4.0 \\
\bottomrule
\end{tabular}}
\vspace{-5pt}
\end{table}

\subsection{Ablation Study}\label{sec45}
The model architecture and different losses for ESAD are discussed in Table~\ref{tal:ablation} through ablation studies. Experiments are conducted on two datasets, \emph{i.e.}, cardio and satellite. Firstly, for the model architecture, results show that without the encoder-decoder-encoder architecture, ESAD with vanilla autoencoder shows relatively low and unstable AUCs (the entropy loss is conducted on the first encoder in this case). Secondly, ablation studies on two proposed losses, \emph{i.e.}, $\mathcal{L}_{ass}$ and $\mathcal{L}_{rec-semi}$, show impressive results. Comparing with vanilla reconstruction loss, $\mathcal{L}_{rec-semi}$ utilizes two transformations for changing the supervisions of labeled anomalous data. Without these transformations, it degrades to the vanilla reconstruction loss where the original data are used as the reconstruction supervisions, leading to relatively lower AUCs. Note that the entropy loss should always be utilized in all experiments since it is highly relative to the anomaly score measurement, but its importance can be shown towards the following discussions for the hyperparameters. 

\begin{figure}[t]
\centering
\begin{minipage}[t]{0.47\textwidth}
\centering
\includegraphics[width=4.7cm]{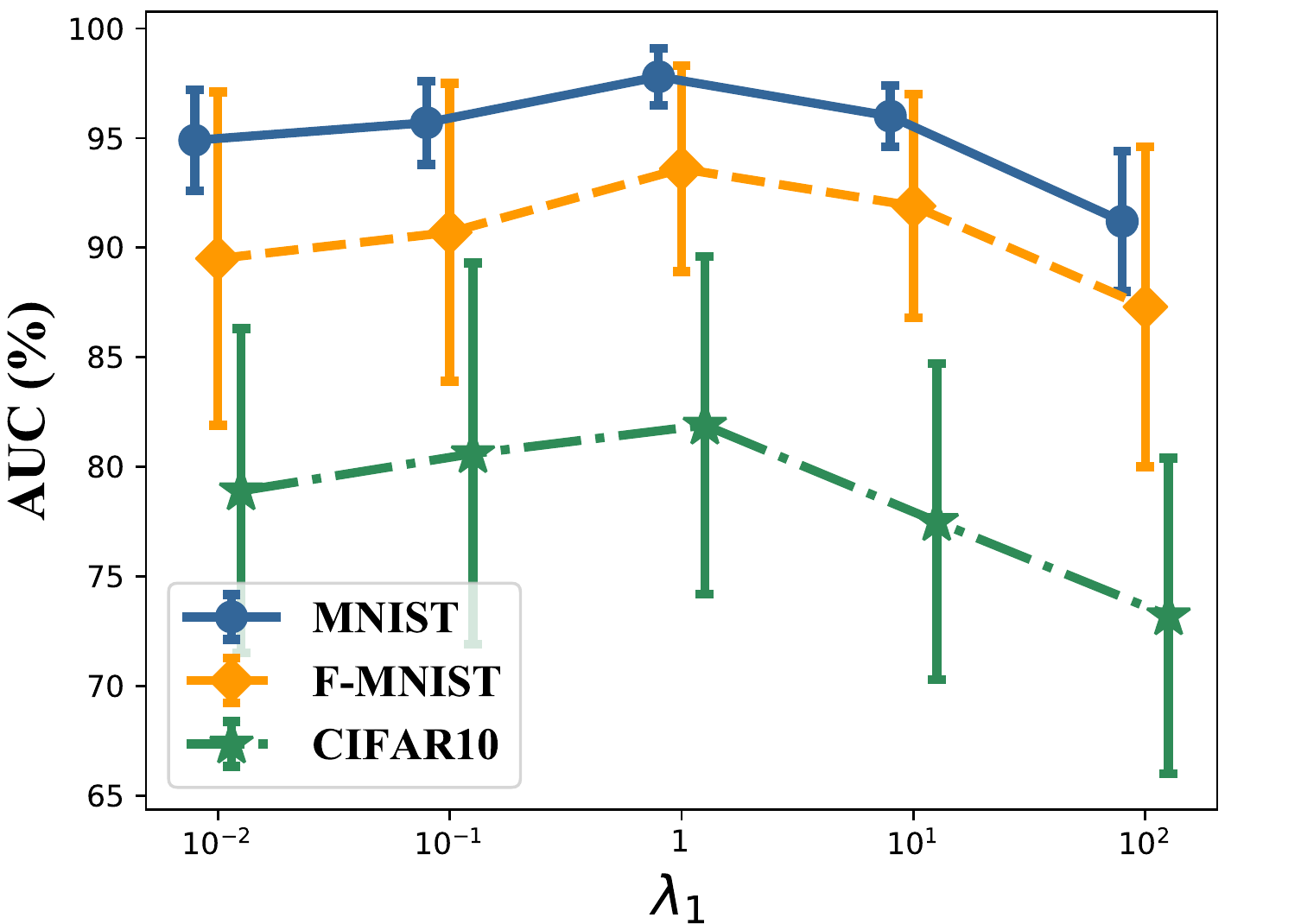}
\vspace{-8pt}
\caption{\label{img:lamda1}ESAD sensitivity analysis w.r.t. $\lambda_1$. We report avg. AUC in \% with st. dev. over 90 experiments. Best viewed in color.}
\end{minipage}
\begin{minipage}[t]{0.2\textwidth}
\end{minipage}
\begin{minipage}[t]{0.47\textwidth}
\centering
\includegraphics[width=5.2cm]{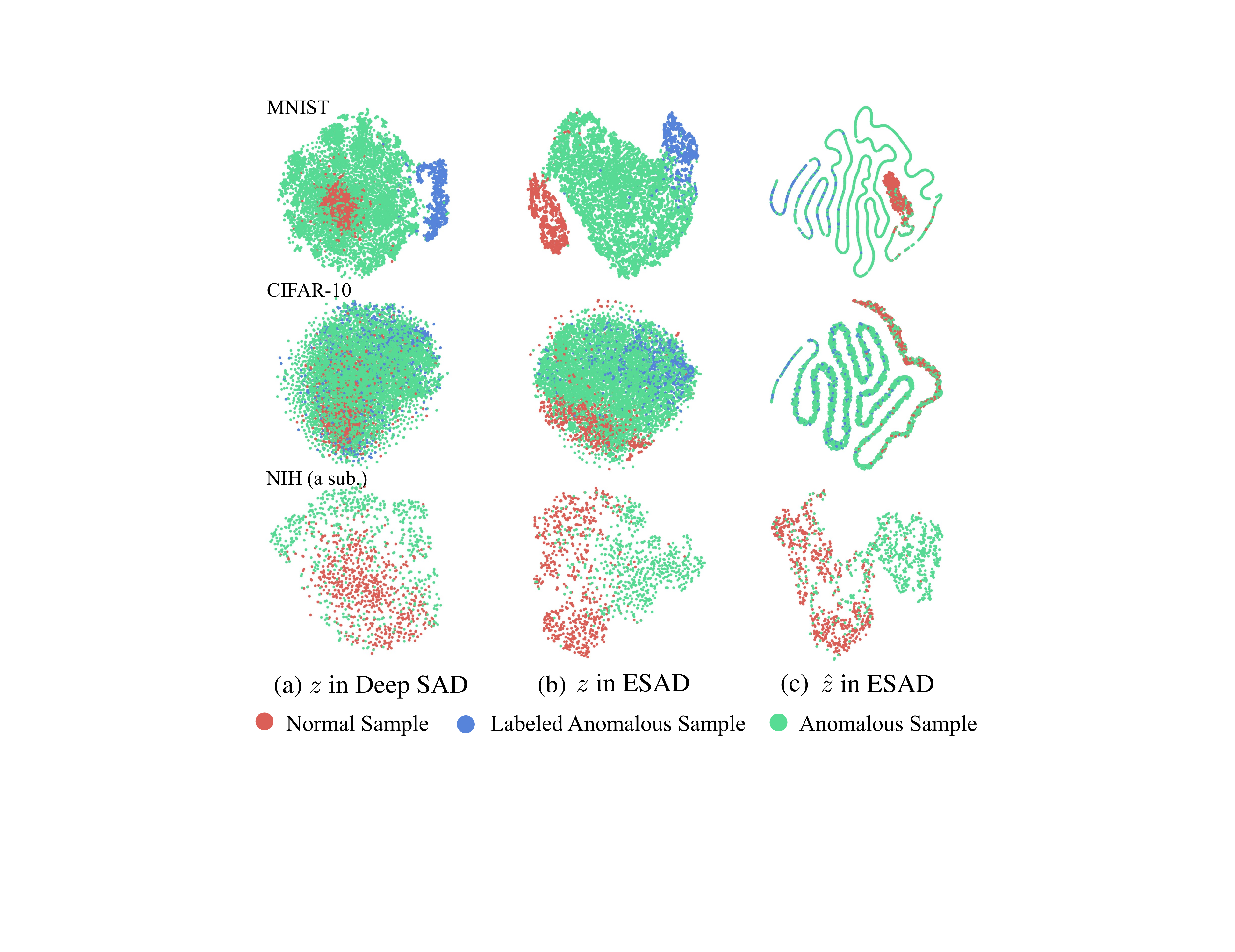}
\vspace{-5pt}
\caption{\label{img:tsne}T-SNE visualization of latent representations on MNIST (top), CIFAR-10 (middle) and NIH (bottom).}
\end{minipage}
\vspace{-5pt}
\end{figure}

Then we analyze the influence of the network choices. For the natural image datasets, Deep SAD~\cite{SAD} uses different LeNet-type networks for each dataset. ESAD does not follow~\cite{SAD} and uses the same network for all datasets. To show the influence of network choices, as shown in Table~\ref{tal:backbone}, we experiment with different networks and show that: i) ESAD is robust to different networks, with a performance gap of 0.3\% - 0.4\% between using the shallow or deep network on F-MNIST, while Deep SAD encounters model collapse with certain networks.  ii) ESAD outperforms Deep SAD for both shallow and deep networks. Results on more datasets are shown in the supplemental material.

We further analyze the sensitivity of the hyperparameters of ESAD. According to Eq.~\eqref{eq:loss_final}, $\lambda_1$ has a certain impact on the performance of semi-supervised AD. The larger $\lambda_1$ means more attention is paid to the entropy, while the smaller $\lambda_1$ pays more attention to the mutual information. Figure~\ref{img:lamda1} shows the ESAD performance with different $\lambda_1$. The results show that the best AUC can be obtained when $\lambda_1$ is set as 1 in all datasets. When $\lambda_1$ is relatively too small or too large, relatively poor AD performance will be achieved. Fortunately, the relationship between AUCs and the $\lambda_1$ presents the same pattern in all datasets, which means that when changing datasets, we may not need to spend too many resources on the adjustment of $\lambda_1$. For $\lambda_2$, we found through experiments that modifying $\lambda_2$ has a relatively small impact. We thus always set $\lambda_2$ as 1. More details are shown in the supplementary material.

\vspace{-10pt}
\subsection{Visualization Analysis}\label{sec:vis}
We show that the latent representations extracted by ESAD can better be used to distinguish samples of different categories through T-SNE~\cite{maaten2008visualizing} analysis. We conduct experiments on MNIST, CIFAR-10 and the medical image dataset NIH. Figure~\ref{img:tsne}~(a) shows the results using latent representations extracted by Deep SAD. Figure~\ref{img:tsne}~(b) and (c) visualize different latent representations, \emph{i.e.}, $z$ and $\hat{z}$, extracted by ESAD, which are more discriminative than the baseline. In Figure~\ref{img:tsne}~(c), $\hat{z}$ shows a more specific structure. It shows that the two latent representations have learned different information.

\vspace{-10pt}
\section{Conclusion}
\vspace{-3pt}
In this paper, we show that factors of \emph{mutual information} and \emph{entropy} constitute an integral objective function for anomaly detection. We achieve end-to-end training by proposing a novel model architecture. The proposed information theoretic framework can also be applied to more semi-supervised tasks, opening avenues for future research.

\paragraph{Acknowledgements.} 
This work is supported by the National Key Research and Development Program of China (No. 2019YFB1804304), SHEITC (No. 2018-RGZN-02046), 111 plan (No. BP0719010),  and STCSM (No. 18DZ2270700), and State Key Laboratory of UHD Video and Audio Production and Presentation.

\small
\bibliography{egbib}

\section{Supplementary Material}
\subsection{Supplementary Proofs} 

\begin{proposition}\label{rmk:eq}
If $\operatorname{KL}\left[p_{N}(X, Z)|| p_{A}(X, Z) \right]$ is maximized, then it is equivalent that \\ $\operatorname{KL}\left[p_{N}(X)|| p_{A}(X) \right]$ and $\operatorname{KL}\left[p_{N}(Z|X)|| p_{A}(Z|X) \right]$ are maximized.
\end{proposition}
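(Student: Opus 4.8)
The plan is to prove the statement through the chain rule for the Kullback–Leibler divergence, turning the joint objective into an additively separable form. Writing the factorizations $p_N(X,Z)=p_N(X)\,p_N(Z\mid X)$ and $p_A(X,Z)=p_A(X)\,p_A(Z\mid X)$ and substituting them into $\operatorname{KL}[p_N(X,Z)\|p_A(X,Z)]=\mathbb{E}_{p_N(X,Z)}[\log(p_N(X,Z)/p_A(X,Z))]$, I would split the logarithm of the product into $\log(p_N(X)/p_A(X))+\log(p_N(Z\mid X)/p_A(Z\mid X))$. The first summand does not depend on $Z$, so its expectation collapses to $\operatorname{KL}[p_N(X)\|p_A(X)]$; the second, after writing the expectation over $p_N(X,Z)$ as an outer expectation over $p_N(X)$ and an inner one over $p_N(Z\mid X)$, becomes $\mathbb{E}_{p_N(X)}\!\big[\operatorname{KL}[p_N(Z\mid X)\|p_A(Z\mid X)]\big]$. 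This is precisely the decomposition already appearing in Eq.~\eqref{eq:LB_KL_reformulate}, whose mutual-information, entropy, and cross-entropy block reassembles into the expected conditional KL.

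Having established
\[
\operatorname{KL}[p_N(X,Z)\|p_A(X,Z)] = \operatorname{KL}[p_N(X)\|p_A(X)] + \mathbb{E}_{p_N(X)}\!\big[\operatorname{KL}[p_N(Z\mid X)\|p_A(Z\mid X)]\big],
\]
I would invoke Gibbs' inequality to note that both summands are non-negative. The joint objective is thus an additively separable sum of two non-negative quantities: one controlled entirely by the data marginals $p_N(X),p_A(X)$, the other by the encoding conditionals $p_N(Z\mid X),p_A(Z\mid X)$. The desired equivalence then reduces to the elementary fact that the maximum of such a sum is attained exactly when each summand is individually maximized.

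The main obstacle is making this ``exactly when'' rigorous rather than merely heuristic, i.e.\ ruling out a trade-off in which the joint is maximized while one component is sacrificed to boost the other. The easy direction, that simultaneously maximizing both components maximizes the joint, needs only that a common maximizer is feasible. For the converse I need that the supremum of the sum equals the sum of the suprema, which I would obtain from separability: the marginal term depends solely on the fixed data distributions and is therefore constant under any change of the encoder, so perturbing the encoder adjusts only the conditional term and cannot play it off against the marginal one. Hence the two terms can be driven to their maxima independently, which closes the equivalence. A minor point to tidy is the implicit averaging over $p_N(X)$ in the statement's conditional term: ``maximizing $\operatorname{KL}[p_N(Z\mid X)\|p_A(Z\mid X)]$'' should be read as maximizing it for $p_N$-almost every $X$, which maximizes its expectation and hence the separable sum.
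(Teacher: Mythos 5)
Your proposal follows essentially the same route as the paper's proof: the chain-rule decomposition $\operatorname{KL}[p_N(X,Z)\|p_A(X,Z)] = \operatorname{KL}[p_N(X)\|p_A(X)] + \mathbb{E}_{p_N(X)}\bigl[\operatorname{KL}[p_N(Z|X)\|p_A(Z|X)]\bigr]$, followed by the observation that maximizing the separable sum is equivalent to maximizing its parts. Your additional remarks --- that the marginal term is constant under changes to the encoder and that the conditional term must be read $p_N$-almost everywhere --- make the ``equivalence'' step more rigorous than the paper's one-line assertion, but the underlying argument is the same.
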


\begin{proof}
The KL divergence for the joint distributions can be decomposed with the chain rule~\cite{cover2006elements}:
\begin{equation}
\small
\begin{split}\nonumber
&\operatorname{KL}\left[p_{N}(X, Z)|| p_{A}(X, Z) \right] \\
=&\mathbb{E}_{p_{N}(X, Z)}\left[ \log \frac{p_{N}(X, Z)}{p_{A}(X, Z)} \right]\\ 
=&\mathbb{E}_{p_{N}(X, Z)} \left[ \log \frac{p_{N}(X)}{p_{A}(X)} + \log \frac{p_{N}(Z|X)}{p_{A}(Z|X)} \right] \\
=&\operatorname{KL}\left[p_{N}(X)|| p_{A}(X) \right] + \mathbb{E}_{p_{N}(X)} \left[ \operatorname{KL}\left[p_{N}(Z |X)|| p_{A}(Z |X) \right] \right].
\end{split}
\end{equation}
To maximize the KL divergence for the joint distributions, it is equivalent that we maximize the KL divergence for both marginal and conditional distributions \cite{dumoulin2016adversarially}.
\end{proof}

\begin{proposition}
Let $I(X_{N}, Z_{N})$ denotes the mutual information between $X_{N}$ and $Z_{N}$; $H(Z_{N})$ denotes the entropy of $Z_{N}$; $H(p_{N}(Z|X),p_{A}(Z|X))$ denotes the cross-entropy between $p_{N}(Z|X)$ and $p_{A}(Z|X)$; ${KL}\left[\emph{p}_{N}(X)\ ||\ \emph{p}_{A}(X) \right]$ denotes the KL divergence between $p_{N}(X)$ and $p_{A}(X)$. Then:
\begin{equation}
\begin{split}\label{eq:LB_KL_reformulate2}
&\operatorname{KL}\left[p_{N}(X, Z)|| p_{A}(X, Z) \right] \\
=&~\emph{I}\left(X_{N},Z_{N}\right) - \emph{H}(Z_{N}) +\mathbb{E}_{p_{N}(X)}\left[ H(p_{N}(Z|X),p_{A}(Z|X))\right]
+KL\left[\emph{p}_{N}(X)||\emph{p}_{A}(X) \right].
\end{split}
\end{equation}
\end{proposition}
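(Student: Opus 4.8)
The plan is to prove the identity by a direct algebraic decomposition of the log-ratio inside the KL divergence, followed by a term-by-term identification of each resulting expectation with its information-theoretic counterpart. First I would write the KL divergence in its expectation form, $\operatorname{KL}[p_N(X,Z)\|p_A(X,Z)] = \mathbb{E}_{p_N(X,Z)}[\log(p_N(X,Z)/p_A(X,Z))]$, and then factor both joint densities via the chain rule $p(X,Z) = p(Z|X)\,p(X)$, giving the ratio $p_N(X,Z)/p_A(X,Z) = [p_N(Z|X)\,p_N(X)]/[p_A(Z|X)\,p_A(X)]$.

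The key algebraic step is to rewrite this ratio by inserting $p_N(Z)/p_N(Z) = 1$, so that it factors exactly as the product displayed in Eq.~\eqref{eq:LB_KL_reformulate}. This manufactures both the mutual-information and the marginal-entropy structure simultaneously. Taking the logarithm then splits the integrand into four additive pieces by linearity of $\log$, and linearity of expectation distributes $\mathbb{E}_{p_N(X,Z)}$ across all four.

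I would then identify each expectation in turn. The term $\mathbb{E}_{p_N(X,Z)}[\log(p_N(Z|X)/p_N(Z))]$ is by definition the mutual information $I(X_N,Z_N)$. For $\mathbb{E}_{p_N(X,Z)}[\log p_N(Z)]$, the integrand depends on $Z$ alone, so marginalizing out $X$ leaves $\mathbb{E}_{p_N(Z)}[\log p_N(Z)] = -H(Z_N)$. For $\mathbb{E}_{p_N(X,Z)}[-\log p_A(Z|X)]$, rewriting the joint expectation as the iterated expectation $\mathbb{E}_{p_N(X)}\mathbb{E}_{p_N(Z|X)}[\cdot]$ produces the conditional cross-entropy, yielding $\mathbb{E}_{p_N(X)}[H(p_N(Z|X),p_A(Z|X))]$. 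Finally, $\mathbb{E}_{p_N(X,Z)}[\log(p_N(X)/p_A(X))]$ depends on $X$ alone, so marginalizing out $Z$ returns $\operatorname{KL}[p_N(X)\|p_A(X)]$. Summing the four pieces gives the claimed equality.

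There is no deep obstacle; the proof is essentially bookkeeping. The only point requiring genuine care is the choice of how to split the ratio—specifically the decision to insert $p_N(Z)/p_N(Z)$ rather than $p_A(Z)/p_A(Z)$—since this is what attaches the entropy and mutual-information terms to the estimable distribution $p_N$ (learnable from normal data) rather than to the inestimable $p_A$. Beyond that, matching each expectation to the correct operation—conditioning on $X$ for the cross-entropy term, and marginalizing over the appropriate variable for the entropy and marginal-KL terms—is the step I would verify most carefully.
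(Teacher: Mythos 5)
Your proposal is correct and follows essentially the same route as the paper's own proof: expand the KL divergence via the chain rule $p(X,Z)=p(Z|X)\,p(X)$, insert the factor $p_N(Z)/p_N(Z)$ to manufacture the mutual-information and entropy terms, and identify the four resulting expectations as $I(X_N,Z_N)$, $-H(Z_N)$, the expected conditional cross-entropy, and $\operatorname{KL}[p_N(X)\|p_A(X)]$. The term-by-term identifications you give match the paper's exactly, so there is nothing to add.
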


\begin{proof}
The KL divergence can be reformulated as:
\begin{equation}
\small
\begin{split}\nonumber 
\setlength{\abovedisplayskip}{3pt}
\setlength{\belowdisplayskip}{3pt}
&~\operatorname{KL}\left[p_{N}(X, Z)|| p_{A}(X, Z) \right] \\
=&~\mathbb{E}_{p_{N}(X, Z)}\left[ \log \frac{p_{N}(X, Z)}{p_{A}(X, Z)} \right]\\ 
=&~\mathbb{E}_{p_{N}(X, Z)}\left[ \log \frac{p_{N}(Z|X) \cdot p_{N}(X)}{p_{A}(Z|X) \cdot p_{A}(X)} \right]\\
=&~\mathbb{E}_{p_{N}(X, Z)}\left[ \log \frac{p_{N}(Z|X) \cdot p_{N}(X) \cdot p_{N}(Z)}{p_{A}(Z|X) \cdot p_{A}(X) \cdot p_{N}(Z)} \right]\\
=&~\mathbb{E}_{p_{N}(X, Z)} \left[\log\left( \frac{p_{N}(Z|X)}{p_{N}(Z)} \cdot {p_{N}(Z)} \cdot \frac{1} {p_{A}(Z|X)} \cdot \frac{p_{N}(X)}{p_{A}(X)} \right) \right].
\end{split} 
\end{equation}
The above formula is decomposed into four components. The first term refers to the mutual information between the original data $X_{N}$ and its latent representation $Z_{N}$:
\begin{equation}
\small
\begin{split}\nonumber
\setlength{\abovedisplayskip}{3pt}
\setlength{\belowdisplayskip}{3pt}
&\mathbb{E}_{p_{N}(X, Z)}\left[ \log \frac{p_{N}(Z|X)}{p_{N}(Z)}\right] \\
=&\mathbb{E}_{p_{N}(X, Z)}\left[ \log \frac{p_{N}(Z|X) \cdot p_{N}(X)}{p_{N}(X) \cdot p_{N}(Z)} \right] \\
=&\mathbb{E}_{p_{N}(X, Z)}\left[ \log \frac{p_{N}(X,Z)}{p_{N}(X) \cdot p_{N}(Z)} \right] \\ 
=&I\left(X_{N},Z_{N}\right).
\end{split}
\end{equation}
The second term refers to the negative entropy of $Z_{N}$:
\begin{equation}
\begin{split}\nonumber
&\mathbb{E}_{p_{N}(X, Z)}\left[ \log {p_{N}(Z)} \right] = -\mathbb{E}_{p_{N}(Z)}\left[ \log \frac{1}{p_{N}(Z)} \right]=-\emph{H}(Z_{N}).
\end{split}
\end{equation}
The third term refers to the expected value of the cross entropy between the conditional distributions $p_{A}(Z|X)$ and $p_{N}(Z|X)$:
\begin{equation}
\begin{split}\nonumber
&\mathbb{E}_{p_{N}(X, Z)}\left[\log \frac{1}{p_{A}(Z|X) }\right] \\
=& \mathbb{E}_{p_{N}(X)} \mathbb{E}_{p_{N}(Z|X)}\left[ - \log {p_{A}(Z|X) }\right] \\
=& \mathbb{E}_{p_{N}(X)}\left[ H(p_{N}(Z|X),p_{A}(Z|X))\right].
\end{split}
\end{equation}
The fourth term is a constant, since $p_{N}(X)$ and $p_{A}(X)$ are fixed when the dataset is given:
\begin{equation}
\begin{split}\nonumber
\mathbb{E}_{p_{N}(X, Z)}\left[ \log \frac{p_{N}(X)}{p_{A}(X)} \right] = \operatorname{KL}\left[\emph{p}_{N}(X)||\emph{p}_{A}(X) \right] = \emph{C}.
\end{split}
\end{equation}
Thus the KL divergence can be reformulated as:
\begin{equation}
\begin{split}\nonumber
&\operatorname{KL}\left[p_{N}(X, Z)|| p_{A}(X, Z) \right] \\
=&~\emph{I}\left(X_{N},Z_{N}\right) - \emph{H}(Z_{N}) +\mathbb{E}_{p_{N}(X)}\left[ H(p_{N}(Z|X),p_{A}(Z|X))\right]+KL\left[\emph{p}_{N}(X)||\emph{p}_{A}(X) \right].
\end{split}
\end{equation}

\end{proof}

\begin{proposition}
The third term in the objective Eq.~\eqref{eq:LB_KL_reformulate2}, i.e., $\mathbb{E}_{p_{N}(X)}\left[ H(p_{N}(Z|X),p_{A}(Z|X)\right]$, is non-negative.
\end{proposition}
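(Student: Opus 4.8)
The plan is to reduce the statement to the pointwise non-negativity of the integrand defining the cross entropy. First I would unfold the definition of the cross entropy inside the third term, writing $H(p_{N}(Z|X),p_{A}(Z|X)) = -\mathbb{E}_{p_{N}(Z|X)}\!\left[\log p_{A}(Z|X)\right]$, so that the quantity to be bounded becomes $\mathbb{E}_{p_{N}(X)}\mathbb{E}_{p_{N}(Z|X)}\!\left[-\log p_{A}(Z|X)\right]$. Reading it in this nested form makes clear that everything hinges on the sign of $-\log p_{A}(Z|X)$ under the non-negative weighting $p_{N}(X)\,p_{N}(Z|X)$.

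Second, I would establish the pointwise bound. Since $p_{A}(Z|X)$ is a conditional probability distribution taking values in $[0,1]$, we have $\log p_{A}(Z|X)\le 0$ and hence $-\log p_{A}(Z|X)\ge 0$ everywhere. Because $p_{N}(X)$ and $p_{N}(Z|X)$ are both non-negative, the integrand is non-negative; taking the inner expectation first shows $H(p_{N}(Z|X),p_{A}(Z|X))\ge 0$ for each fixed $X$, and the outer expectation of this non-negative quantity is again non-negative, which is exactly the claim. This is essentially the standard fact that the cross entropy of a pair of probability mass functions is non-negative, applied conditionally and then averaged.

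Third, the step I expect to be the real obstacle is justifying the bound $p_{A}(Z|X)\in[0,1]$ rigorously. Equivalently, one may decompose $H(p_{N}(Z|X),p_{A}(Z|X)) = H(p_{N}(Z|X)) + \operatorname{KL}\!\left[p_{N}(Z|X)\,\|\,p_{A}(Z|X)\right]$, where the KL term is non-negative by Gibbs' inequality but the entropy term $H(p_{N}(Z|X))$ is only guaranteed non-negative in the discrete setting. In the continuous regime used elsewhere in the paper, where $Z$ is taken to be Gaussian, densities can exceed one and differential entropy can be negative, so the clean pointwise argument above implicitly treats $Z$ as discrete-valued (or otherwise assumes bounded conditional densities). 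I would therefore state this assumption explicitly at the outset, since it is precisely what legitimizes both the inequality $\log p_{A}\le 0$ and the non-negativity of the entropy; once that is fixed, the remaining manipulation is routine.
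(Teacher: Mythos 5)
Your argument is correct and follows essentially the same route as the paper: both reduce the claim to the pointwise bound $-\log p_{A}(Z|X)\ge 0$ under the non-negative weighting $p_{N}(X)\,p_{N}(Z|X)$. The obstacle you rightly flag in your third paragraph is exactly what the paper resolves by an explicit assumption --- not discreteness or globally bounded densities, but a \emph{separability} assumption that the evaluated density satisfies $\log p_{A}(Z|X)\le 0$ for $(X,Z)\sim p_{N}(X,Z)$, i.e., only on the support of the normal joint distribution, motivated by the standard premise that normal and anomalous data are well separated in the latent space.
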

\begin{proof}
We assume that $p_N(Z|X)$ and $p_A(Z|X)$ are separable at the latent space, \emph{i.e.,} for $X,Z \sim p_N(X,Z)$, the evaluated density $\log p_A(Z|X) \leqslant 0$.
This assumption is indeed consist with the fundamental assumption in~\cite{chandola2009anomaly}: data can be embedded into a certain representation space where normal instances and anomalies appear significantly different.
With the above assumption,\\ $\mathbb{E}_{p_N(X)}\left[ H(p_N(Z|X),p_A(Z|X)\right]$ is shown to be non-negative:
\begin{equation}
\begin{split}\nonumber
\setlength{\abovedisplayskip}{4pt}
\setlength{\belowdisplayskip}{4pt}
&\inf \mathbb{E}_{p_{N}(X)}\left[ H(p_{N}(Z|X),p_{A}(Z|X)\right]\\
=& \inf \mathbb{E}_{p_{N}(X,Z)} [ - \log p_{A}(Z|X)]  \\
\geqslant &\mathbb{E}_{p_{N}(X,Z)} [\inf \left(-\log p_{A}(Z|X)\right)] \\
\geqslant &0. 
\end{split}
\end{equation}
\end{proof}

\begin{proposition}
Assuming $Z$ follows an isotropic Gaussian, with mean $\boldsymbol{\mu}$, covariance $\Sigma$ and $Z \subseteq \mathbb{R}^{d}$, the entropy of $Z$, \emph{i.e.}, $\emph{H}(Z)$, is proportional to its log-variance for a fixed dimensionality $d$, without dependence on its mean $\boldsymbol{\mu}$.
\end{proposition}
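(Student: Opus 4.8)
The plan is to evaluate the differential entropy of $Z$ directly from its definition and read off its dependence on the variance and the mean. First I would start from $H(Z) = -\int p(z)\log p(z)\,dz = \mathbb{E}_{p(Z)}[-\log p(Z)]$ and substitute the isotropic Gaussian density $p(z) = (2\pi\sigma^2)^{-d/2}\exp(-\|z-\boldsymbol{\mu}\|^2/(2\sigma^2))$, which is the case $\Sigma = \sigma^2 I$. Taking logarithms gives $-\log p(z) = \frac{d}{2}\log(2\pi\sigma^2) + \frac{\|z-\boldsymbol{\mu}\|^2}{2\sigma^2}$, so $H(Z)$ decomposes into a constant normalization term plus $\frac{1}{2\sigma^2}\mathbb{E}_{p(Z)}[\|Z-\boldsymbol{\mu}\|^2]$.

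The second step is to compute this expected centered quadratic. Because $Z-\boldsymbol{\mu}\sim N(0,\sigma^2 I)$, we have $\mathbb{E}_{p(Z)}[\|Z-\boldsymbol{\mu}\|^2] = \operatorname{tr}(\Sigma) = d\sigma^2$, a quantity that does not involve $\boldsymbol{\mu}$. Substituting back yields $H(Z) = \frac{d}{2}\log(2\pi\sigma^2) + \frac{d}{2} = \frac{d}{2}\log\sigma^2 + \frac{d}{2}\log(2\pi e)$, equivalently the standard expression $\frac{1}{2}\log\big((2\pi e)^d|\Sigma|\big)$ specialized to $|\Sigma| = \sigma^{2d}$. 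For fixed $d$ the term $\frac{d}{2}\log(2\pi e)$ is a constant, so the only varying contribution is $\frac{d}{2}\log\sigma^2$, which establishes $H(Z)\propto\log\sigma^2$ with no remaining dependence on the mean $\boldsymbol{\mu}$.

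Rather than a genuine obstacle, the single point to state carefully is the precise sense of \emph{proportional}: the relationship is affine, $H(Z) = \frac{d}{2}\log\sigma^2 + C_d$ with $C_d$ depending only on $d$, so proportionality should be read as holding up to a dimension-dependent additive constant. The conceptually important part is the vanishing of $\boldsymbol{\mu}$, and it follows at once from translation invariance: $\boldsymbol{\mu}$ enters the integrand only through the centered quadratic $\|Z-\boldsymbol{\mu}\|^2$, whose expectation under $p(Z)$ equals $d\sigma^2$ regardless of $\boldsymbol{\mu}$, so no step of the computation ever retains the mean after the expectation is taken.
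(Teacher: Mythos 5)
Your proof is correct. You compute the differential entropy directly from the definition: substitute the isotropic Gaussian density, split $-\log p(z)$ into the normalization constant $\tfrac{d}{2}\log(2\pi\sigma^2)$ plus the centered quadratic, and evaluate $\mathbb{E}\left[\|Z-\boldsymbol{\mu}\|^2\right]=d\sigma^2$ to obtain $H(Z)=\tfrac{d}{2}\log\sigma^2+\tfrac{d}{2}\log(2\pi e)$. The paper instead quotes the standard maximum-entropy bound $H(Z)\leq \tfrac{1}{2}\log\bigl((2\pi e)^{d}\det\Sigma\bigr)$, notes that equality holds precisely for jointly Gaussian $Z$, and then specializes $\Sigma=\sigma^{2}I$ to reach the same expression $\tfrac{d}{2}\bigl(1+\log(2\pi\sigma^{2})\bigr)$. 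The two routes land on the identical formula; yours is more elementary and self-contained (no appeal to the max-entropy characterization, whose inequality half is not actually needed once Gaussianity is assumed), and it makes the disappearance of $\boldsymbol{\mu}$ transparent via translation invariance of the centered second moment. The paper's citation-based route is shorter on the page and records the general-covariance formula in passing. Your explicit remark that the relationship is affine --- $H(Z)=\tfrac{d}{2}\log\sigma^{2}+C_d$ with $C_d$ depending only on $d$ --- is a fair and slightly more careful reading of ``proportional'' than the paper offers.
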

\begin{proof}
For $Z$ with covariance $\Sigma$ and $Z \subseteq \mathbb{R}^{d}$,
\begin{equation}
\begin{split}\nonumber
    \emph{H}(Z)&=\mathbb{E}[-\log p(Z)]
    =-\int_{Z} p(Z) \log p(Z) \mathrm{d} Z
    \leq \frac{1}{2} \log ((2 \pi e)^{d} \operatorname{det} \Sigma),
\end{split}
\end{equation}
which holds with equality iff $Z$ is jointly Gaussian~\cite{cover2012elements}. Assuming $Z$ follows an isotropic Gaussian, $Z \sim N(\boldsymbol{\mu}, \sigma^{2} I)$ with $\sigma>0$, we get,
\begin{equation}
\begin{split}\nonumber
    \emph{H}(Z)=\frac{1}{2} \log ((2 \pi e)^{d} \operatorname{det} \sigma^{2} I)=\frac{d}{2}(1+\log (2 \pi \sigma^{2})) \propto \log \sigma^{2},
\end{split}
\end{equation}
which shows that the entropy of $Z$ is proportional to its log-variance for a fixed dimensionality $d$. The above proof has no dependence on the mean $\boldsymbol{\mu}$.
\end{proof}

\subsection{Analysis of Deep SAD}\label{sec32}
Deep SAD builds upon Infomax principle, which maximizes the mutual information $I(X, Z)$ between data and latent representations with regularization on the representations. The objective function for Deep SAD is formulated as:
\begin{equation}\label{eq:sad}
\setlength{\abovedisplayskip}{2pt}
\setlength{\belowdisplayskip}{2pt}
    \max_\theta ~I(X, Z) + \beta (H(Z_A) - H(Z_N)),
\end{equation}
where regularization is enforced through entropy. For $\forall \mathbf{x}\in \mathcal{X}$, Deep SAD adopts an autoencoder consisting of an encoder $Enc(\cdot)$ and a decoder $Dec(\cdot)$: $\mathbf{z} =Enc(\mathbf{x}),~
\hat{\mathbf{x}}=Dec(\mathbf{z})$,
where $\hat{\mathbf{x}}$ is the reconstructed sample and $\mathbf{z}$ is the corresponding latent representation, and takes the following two-step process to implement the above objective function.

\noindent\textbf{(i) Autoencoder Pre-training:} To \emph{maximize the mutual information} between the data and the latent representations, 
a reconstruction loss is adopted to pre-train the autoencoder:
\begin{equation}
\setlength{\abovedisplayskip}{2pt}
\setlength{\belowdisplayskip}{2pt}
    \mathcal{L}_{rec} = \frac{1}{n+m}\sum_{i = 1}^{n+m}\|\hat{\mathbf{x}}_{i}-\mathbf{x}_{i}\|^2,
\end{equation}
where $\mathbf{x}_{1}, \cdots, \mathbf{x}_{n+m} \in \mathcal{X}$. 

\noindent\textbf{(ii) Encoder Fine-turning:} 
To further regularize the entropy of the latent representations, the encoder is fine-turned with an SVDD loss, 
\begin{equation}
\setlength{\abovedisplayskip}{2pt}
\setlength{\belowdisplayskip}{2pt}
    \mathcal{L}_{SVDD} = \frac{1}{n}\sum_{i = 1}^{n}\|\mathbf{z}^u_i-\mathbf{c}\|_2 + \frac{\eta}{m}\sum_{j = 1}^{m}\|\mathbf{z}^l_j-\mathbf{c}\|_2^{y_j},
\end{equation}
where $\mathbf{z}^u_1, \cdots, \mathbf{z}^u_n \in \mathcal{Z}$ are the corresponding latent representations of unlabeled samples $\mathbf{x}^u_1, \cdots, \mathbf{x}^u_n$, $\mathbf{z}^l_1, \cdots, \mathbf{z}^l_m \in \mathcal{Z}$ are the corresponding latent representations of labeled samples $\mathbf{x}^l_1, \cdots, \mathbf{x}^l_m$, and $\eta$ is set as 1. The hypersphere center $\mathbf{c}$ is set as the mean of the outputs obtained from a forward pass of the encoder for all the data. In fact, Deep SAD does not use the coefficient $\beta$ in Eq.~\eqref{eq:sad}, because the two terms are separately optimized in two stages.

We now argue that the reason for the two-stage implementation of Deep SAD is the contradiction between the optimization of mutual information and entropy. For example, when the latent representations have extremely low entropy, especially zero in the extreme case, the model can be considered as mapping all data into a constant in which the mutual information is restricted to zero, which contradicts with the mutual information maximization. The two-stage implementation for Deep SAD avoids directly facing the above contradiction. 

\renewcommand \arraystretch{0.8}
\begin{table}[h]
  \centering
  \scriptsize
  \caption{Model Structure of ESAD.}
  \label{tal:esad}
  \setlength{\tabcolsep}{1pt}{
\begin{tabular}{C{1.4cm}C{2.4cm}C{4.0cm}}
      \toprule
      Layer & Input & Output\\
      \cmidrule(lr){1-1}\cmidrule(lr){2-2}\cmidrule(lr){3-3}
      $3\times 3\times 64$ & $x\ (3 \times H \times W)$ & $x_{0-1}\ (64 \times H \times W)$\\
      $3\times 3\times 64$ & $x_{0-1}$ & $x_{0-2}\ (64 \times H \times W)$\\ 
      MaxPool & $x_{0-2}$ & $x_{1-1}\ (64 \times 1/2 H \times 1/2W)$ \\
      $3\times 3\times 128$ & $x_{1-1}$ & $x_{1-2}\ (128 \times 1/2 H \times 1/2W)$\\ 
      $3\times 3\times 128$ & $x_{1-2}$ & $x_{1-3}\ (128 \times 1/2 H \times 1/2W)$\\ 
      MaxPool & $x_{1-3}$ & $x_{2-1}\ (128 \times 1/4 H \times 1/4W)$ \\
      $3\times 3\times 256$ & $x_{2-1}$ & $x_{2-2}\ (256 \times 1/4 H \times 1/4W)$\\ 
      $3\times 3\times 256$ & $x_{2-2}$ & $x_{2-3}\ (256 \times 1/4 H \times 1/4W)$\\ 
      MaxPool & $x_{2-3}$ & $x_{3-1}\ (256 \times 1/8 H \times 1/8W)$ \\
      $3\times 3\times 512$ & $x_{3-1}$ & $x_{3-2}\ (256 \times 1/8 H \times 1/8W)$\\ 
      $3\times 3\times 512$ & $x_{3-2}$ & $x_{3-3}\ (256 \times 1/8 H \times 1/8W)$\\ 
      MaxPool & $x_{3-3}$ & $x_{4-1}\ (256 \times 1/8 H \times 1/16W)$ \\
      $3\times 3\times 512$ & $x_{4-1}$ & $x_{4-2}\ (512 \times 1/16 H \times 1/16W)$\\ 
      $3\times 3\times 512$ & $x_{4-2}$ & $z\ (512 \times 1/16 H \times 1/16W)$\\ 
      \cmidrule(lr){1-1}\cmidrule(lr){2-2}\cmidrule(lr){3-3}
      UpSample & $z$ & $up_{3-1}\ (512 \times 1/8 H \times 1/8W)$ \\
      $3\times 3\times 256$ & $up_{3-1}$ & $up_{3-2}\ (256 \times 1/8 H \times 1/8W)$\\ 
      $3\times 3\times 256$ & $up_{3-2}$ & $up_{3-3}\ (256 \times 1/8 H \times 1/8W)$\\ 
      UpSample & $up_{3-3}$ & $up_{2-1}\ (256 \times 1/4 H \times 1/4W)$ \\
      $3\times 3\times 128$ & $up_{2-1}$ & $up_{2-2}\ (128 \times 1/4 H \times 1/4W)$\\ 
      $3\times 3\times 128$ & $up_{2-2}$ & $up_{2-3}\ (128 \times 1/4 H \times 1/4W)$\\ 
      UpSample & $up_{2-3}$ & $up_{1-1}\ (128 \times 1/2 H \times 1/2W)$ \\
      $3\times 3\times 64$ & $up_{1-1}$ & $up_{1-2}\ (64 \times 1/2 H \times 1/2W)$\\ 
      $3\times 3\times 64$ & $up_{1-2}$ & $up_{1-3}\ (64 \times 1/2 H \times 1/2W)$\\ 
      UpSample & $up_{1-3}$ & $up_{0-1}\ (64 \times H \times W)$ \\
      $3\times 3\times 64$ & $up_{0-1}$ & $up_{0-2}\ (64 \times H \times W)$\\ 
      $3\times 3\times 64$ & $up_{0-2}$ & $up_{0-3}\ (64 \times H \times W)$\\ 

      $3\times 3\times 3$ & $up_{0-3}$ & $\hat{x}\ (3 \times H \times W)$\\ 
      \cmidrule(lr){1-1}\cmidrule(lr){2-2}\cmidrule(lr){3-3}
      $3\times 3\times 64$ & $\hat{x}$ & $x_{5-1}\ (64 \times H \times W)$\\
      $3\times 3\times 64$ & $x_{5-1}$ & $x_{5-2}\ (64 \times H \times W)$\\ 
      MaxPool & $x_{5-2}$ & $x_{6-1}\ (64 \times 1/2 H \times 1/2W)$ \\
      $3\times 3\times 128$ & $x_{6-1}$ & $x_{6-2}\ (128 \times 1/2 H \times 1/2W)$\\ 
      $3\times 3\times 128$ & $x_{6-2}$ & $x_{6-3}\ (128 \times 1/2 H \times 1/2W)$\\ 
      MaxPool & $x_{6-3}$ & $x_{7-1}\ (128 \times 1/4 H \times 1/4W)$ \\
      $3\times 3\times 256$ & $x_{7-1}$ & $x_{7-2}\ (256 \times 1/4 H \times 1/4W)$\\ 
      $3\times 3\times 256$ & $x_{7-2}$ & $x_{7-3}\ (256 \times 1/4 H \times 1/4W)$\\ 
      MaxPool & $x_{7-3}$ & $x_{8-1}\ (256 \times 1/8 H \times 1/8W)$ \\
      $3\times 3\times 512$ & $x_{8-1}$ & $x_{8-2}\ (256 \times 1/8 H \times 1/8W)$\\ 
      $3\times 3\times 512$ & $x_{8-2}$ & $x_{8-3}\ (256 \times 1/8 H \times 1/8W)$\\ 
      MaxPool & $x_{8-3}$ & $x_{9-1}\ (256 \times 1/8 H \times 1/16W)$ \\
      $3\times 3\times 512$ & $x_{9-1}$ & $x_{9-2}\ (512 \times 1/16 H \times 1/16W)$\\ 
      $3\times 3\times 512$ & $x_{9-2}$ & $\hat{z}\ (512 \times 1/16 H \times 1/16W)$\\ 
      \bottomrule
    \end{tabular}}
\end{table}

\renewcommand \arraystretch{0.85}
\begin{table}[t]
\centering
\caption{Classic anomaly detection benchmarks~\cite{Rayana2016}.}
\label{tal:dataset}
\small
\setlength{\tabcolsep}{1.3pt}{
\begin{tabular}{C{2.0cm}C{1.8cm}C{2.0cm}C{2.0cm}}
\toprule
Dataset & Numbers & Dimensions & \#outliers (\%) \\
\cmidrule(lr){1-1} \cmidrule(lr){2-2} \cmidrule(lr){3-3} \cmidrule(lr){4-4}
arrhythmia & 452 & 274 & 66 (14.6\%) \\
cardio & 1,831 & 21 & 176 (9.6\%) \\
satellite & 6,435 & 36 & 2,036 (31.6\%) \\
satimage-2 & 5,803 & 36 & 71 (1.2\%) \\
shuttle & 49,097 & 9 & 3,511 (7.2\%) \\
thyroid & 3,772 & 6 & 93 (2.5\%) \\
\bottomrule
\end{tabular}}
\end{table}

\subsection{Model Architecture and Training Details} 
The model architecture for ESAD is shown in Table~\ref{tal:esad}. For the training, we use stochastic gradient descent (SGD)~\cite{bottou2010large} optimizer with default hyperparameters in Pytorch. ESAD is trained using a batch size of 32 for $200$ epochs with NVIDIA GTX 2080Ti. The learning rate is initially set as 0.1, and is divided by 2 every $50$ epoch.

\subsection{Datasets}
\noindent\textbf{Natural Image Datasets.} MNIST~\cite{lecun1998mnist}, a dataset consists of 70,000 $28\times28$ handwritten grayscale digit images; Fashion-MNIST~\cite{xiao2017fashion}, a relatively new dataset comprising $28\times28$ grayscale images of 70,000 fashion products from 10 categories, with 7,000 images per category; CIFAR-10~\cite{krizhevsky2009learning}, a dataset consists of 60,000 $32\times32$ RGB images of 10 classes, with 6,000 images for per class.

\noindent\textbf{Medical Image Datasets.} Following \cite{tang2019deep,tuluptceva2020anomaly}, we examine the detection of metastases in H\&E stained images of lymph nodes in Camelyon16~\cite{bejnordi2017diagnostic} and the recognition of fourteen diseases on the chest X-rays in the NIH dataset~\cite{wang2017chestx}. 

For the NIH dataset, images without any disease marker were considered normal. Pulmonary and cardiac abnormalities in this dataset include atelectasis, effusion, infiltration, mass, nodule, pneumonia, pneumothorax, consolidation, edema, emphysema, fibrosis, pleural thickening, hernia and cardiomegaly, which are all considered anomalous. Following~\cite{tang2019deep,tuluptceva2020anomaly}, we split the dataset into two sub-datasets having only posteroanterior (PA) or anteroposterior (AP) projections. Note that in the training set, the ratios of labeled anomalous samples are 3.9\% for AP and 3.3\% for PA. We also experiment on a subset containing clearer normal/anomalous cases \cite{tang2019deep}. This subset consists of 5110 normal and 857 anomalous images for training, and 677 normal and 677 anomalous images for testing. 

For the Camelyon16 dataset, we sample the Vahadane-normalized~\cite{vahadane2016structure} $64\times 64$ tiles from the fully normal slides with magnification of $10\times$, and treat these as normal. Tiles with metastases are treated as anomalous. It contains 7612 normal and 200 anomalous training images, and 4000 (normal) + 817 (anomalous) images for the test.

\noindent\textbf{Classic anomaly detection benchmark datasets.} We use six non-image classic anomaly detection benchmark datasets~\cite{Rayana2016}. Following~\cite{SAD}, for the evaluation, we consider random train-to-test set splits of 60:40 while maintaining the original proportion of anomalies in each set. The supplementary details of the classic anomaly detection benchmarks~\cite{Rayana2016} are shown in Table~\ref{tal:dataset}.

\subsection{Competing Methods}
We consider several shallow unsupervised methods, deep unsupervised anomaly detection competitors and semi-supervised anomaly detection approaches as baselines. Complete details are shown as follows:

\noindent \textbf{(1) OC-SVM/SVDD}~\cite{scholkopf2001estimating,tax2004support}: The OC-SVM and SVDD are equivalent for the Gaussian/RBF kernel. OC-SVM/SVDD here have unfair advantages by selecting their hyperparameters to maximize AUC on a subset $(10\%)$ of the test set to establish a strong baseline. The RBF scale parameters $\gamma \in\left\{2^{-7}, 2^{-6}, \ldots 2^{2}\right\}$ are considered and the best performing one is selected. Then the best final results are reported over $\nu$-parameter, where $\nu \in$ \{0.01,0.05,0.1,0.2,0.5\}.

\noindent \textbf{(2) Isolation Forest}~\cite{liu2008isolation}: The number of trees is set as $t=100$ and the sub-sampling size is set as $\psi=256$ as recommended in the original work.

\noindent \textbf{(3) SSAD}~\cite{gornitz2013toward}: SSAD also have the unfair advantages the same as OC-SVM/SVDD. The scale parameters $\gamma$ of the RBF kernel are selected from $\gamma \in\left\{2^{-7}, 2^{-6}, \ldots 2^{2}\right\}$ and then report the best performing one. Otherwise we set the hyperparameters as recommend by the original authors to $\kappa=1, \kappa=1, \eta_{u}=1,$ and $\eta_{l}=1$~\cite{gornitz2013toward}.

\noindent \textbf{(4) Convolutional Autoencoder (CAE)}~\cite{masci2011stacked}: The autoencoders are trained on the MSE reconstruction loss that also serves as the anomaly score.

\noindent \textbf{(5) Deep SVDD}~\cite{ruff2018deep}: Both variants, Soft-Boundary Deep SVDD and One-Class Deep SVDD are considered as unsupervised baselines and always report the better performance as the unsupervised result. For Soft-Boundary Deep SVDD, The radius $R$ on every mini-batch is optimally solved. For Deep SVDD, all the bias terms from a network are removed to prevent a hypersphere collapse as recommended by the authors in the original work~\cite{ruff2018deep}.

\noindent \textbf{(6) SS-DGM}~\cite{kingma2014semi}: We consider both the $M2$ and $M1+M2$ model and always report the better performing result. Other settings are following the original work~\cite{kingma2014semi}.

\noindent \textbf{(7) Deep SAD~\cite{SAD}}: The results are borrow from~\cite{SAD}. We set $\lambda=10^{-6}$ and equally weight the unlabeled and labeled examples by setting $\eta=1$ if not reported otherwise.

\renewcommand \arraystretch{0.9}
\begin{table}[t]
\centering
\caption{Average area under the ROC curve (AUC) in \% on natural image datasets, comparing with unsupervised anomaly detection methods. ``$\dag$'' denotes the highest test AUC among multiple running for the strong baselines. ``$*$'' denotes the highest test AUC among all training epochs for the stronger baselines. We report the results of unsupervised ESAD, where we ignore the labeled data in the training set. Emphasizing that ESAD focuses on the semi-supervised setting but not the unsupervised setting.}
\label{tal:un}
\footnotesize
\begin{tabular}{C{2.5cm}C{1.3cm}C{1.3cm}C{1.5cm}}
\toprule
Method & MNIST & F-MNIST & CIFAR-10\\
\cmidrule(lr){1-1} \cmidrule(lr){2-2} \cmidrule(lr){3-3} \cmidrule(lr){4-4}
CAE~\cite{masci2011stacked} & 92.9 $\pm$ 5.7 & 90.2 $\pm$ 5.8 & 56.2 $\pm$ 13.2\\
IF Hybrid~\cite{liu2008isolation} & 90.5 $\pm$ 5.3 & 82.5 $\pm$ 8.1 & 59.9 $\pm$ 6.7 \\
Deep SVDD~\cite{ruff2018deep} & 92.8 $\pm$ 4.9 & 89.2 $\pm$ 6.2 & 60.9 $\pm$ 9.4\\ 
AnoGAN$^\dag$~\cite{schlegl2017unsupervised} & 93.7 & - & 61.2\\
ALOCC$^*$~\cite{Sabokrou2018Adversarially} & 93.3 & - & 62.2\\
ADGAN$^*$~\cite{deecke2018anomaly} & 94.7 & 88.4 & 62.4\\
OC-SVM Hybrid~\cite{scholkopf2001estimating} & 96.3 $\pm$ 2.5 & 91.2 $\pm$ 4.7 & 63.8 $\pm$ 9.0\\
OCGAN$^\dag$~\cite{perera2019ocgan} & 97.5 & - & 65.6\\
GANomaly$^*$~\cite{akccay2019skip} & 92.8 & 80.9 & 69.5\\
P-KDGAN$^\dag$~\cite{zhang2020p} & 97.8 & - & 73.8\\
DGEO$^\dag$~\cite{golan2018deep} & 98.0 & 93.5 & 86.0\\
\cmidrule(lr){1-1} \cmidrule(lr){2-2} \cmidrule(lr){3-3} \cmidrule(lr){4-4}
ESAD (unsupervised) & 98.5 $\pm$ 1.3 & 94.0 $\pm$ 4.5 & 78.8 $\pm$ 6.5\\
ESAD & 99.6 $\pm$ 0.3 & 95.9 $\pm$ 4.0 & 88.5 $\pm$ 6.9\\
\bottomrule
\end{tabular}
\vspace{-10pt}
\end{table}

\renewcommand \arraystretch{1.0}
\begin{table*}[t]
\centering
\caption{Complete results of \textbf{experimental scenario (ii)}, where we pollute the unlabeled part of the training set with (unknown) anomalies. We report the avg. AUC in \% with st. dev. computed over 90 experiments at various ratios $\gamma_{p}$.}
\label{tal:2}
\tiny
\setlength{\tabcolsep}{1.0pt}{
\begin{tabular}{C{1.2cm}C{0.5cm}C{1.0cm}C{1.0cm}C{1.0cm}C{1.0cm}C{1.0cm}C{1.0cm}C{1.0cm}C{0.6cm}C{1.0cm}C{1.0cm}}
\toprule
Data & $\gamma_p$ & \makecell[c]{OC-SVM \\Hybrid~\cite{scholkopf2001estimating}} & \makecell[c]{IF \\Hybrid~\cite{liu2008isolation}} & \makecell[c]{CAE\\\cite{masci2011stacked}} & \makecell[c]{Deep\\ SVDD~\cite{ruff2018deep}} & \makecell[c]{SSAD\\ Hybrid~\cite{gornitz2013toward}} & \makecell[c]{SS-DGM\\\cite{kingma2014semi}} & \makecell[c]{Deep\\ SAD~\cite{SAD}} &
\makecell[c]{TLSAD\\~\cite{TLSAD}} & \makecell[c]{ESAD \\ (ours)} & \makecell[c]{Supervised\\ Classifier}\\
\cmidrule(lr){1-1} \cmidrule(lr){2-2} \cmidrule(lr){3-3} \cmidrule(lr){4-4} \cmidrule(lr){5-5} \cmidrule(lr){6-6} \cmidrule(lr){7-7} \cmidrule(lr){8-8} \cmidrule(lr){9-9} \cmidrule(lr){10-10} \cmidrule(lr){11-11} \cmidrule(lr){12-12}
& .00 & 96.3 $\pm$ 2.5 & 90.5 $\pm$ 5.3 & 92.9 $\pm$ 5.7 & 92.8 $\pm$ 4.9 & 97.4 $\pm$ 2.0 & 92.2 $\pm$ 5.6 & 96.7 $\pm$ 2.4 & 96.9 & \textbf{99.4} $\pm$ \textbf{0.3} & 94.5 $\pm$ 4.6 \\
& .01 & 95.6 $\pm$ 2.5 & 90.6 $\pm$ 5.0 & 91.3 $\pm$ 6.1 & 92.1 $\pm$ 5.1 & 95.2 $\pm$ 2.3 & 92.0 $\pm$ 6.0 & 95.5 $\pm$ 3.3 & 94.5 & \textbf{99.2} $\pm$ \textbf{0.6} & 91.5 $\pm$ 5.9 \\
MNIST & .05 & 93.8 $\pm$ 3.9 & 89.7 $\pm$ 6.0 & 87.2 $\pm$ 7.1 & 89.4 $\pm$ 5.8 & 89.5 $\pm$ 3.9 & 91.0 $\pm$ 6.9 & 93.5 $\pm$ 4.1 & 94.0 & \textbf{98.5} $\pm$ \textbf{1.0} & 86.7 $\pm$ 7.4 \\
& .10 & 91.4 $\pm$ 5.1 & 88.2 $\pm$ 6.5 & 83.7 $\pm$ 8.4 & 86.5 $\pm$ 6.8 & 86.0 $\pm$ 4.6 & 89.7 $\pm$ 7.5 & 91.2 $\pm$ 4.9 & 93.5 & \textbf{97.8} $\pm$ \textbf{1.3} & 83.6 $\pm$ 8.2 \\
& .20 & 85.9 $\pm$ 7.6 & 85.3 $\pm$ 7.9 & 78.6 $\pm$ 10.3 & 81.5 $\pm$ 8.4 & 82.1 $\pm$ 5.4 & 87.4 $\pm$ 8.6 & 86.6 $\pm$ 6.6 & 88.6 & \textbf{96.7} $\pm$ \textbf{2.0} & 79.7 $\pm$ 9.4 \\
\cmidrule(lr){1-1} \cmidrule(lr){2-2} \cmidrule(lr){3-3} \cmidrule(lr){4-4} \cmidrule(lr){5-5} \cmidrule(lr){6-6} \cmidrule(lr){7-7} \cmidrule(lr){8-8} \cmidrule(lr){9-9} 
\cmidrule(lr){10-10}
\cmidrule(lr){11-11} \cmidrule(lr){12-12}
& .00 & 91.2 $\pm$ 4.7 & 82.5 $\pm$ 8.1 & 90.2 $\pm$ 5.8 & 89.2 $\pm$ 6.2 & 90.5 $\pm$ 5.9 & 71.4 $\pm$ 12.7 & 90.5 $\pm$ 6.5 & 91.4 & \textbf{95.6} $\pm$ \textbf{4.1} & 76.8 $\pm$ 13.2 \\
& .01 & 91.5 $\pm$ 4.6 & 84.9 $\pm$ 7.2 & 87.1 $\pm$ 7.3 & 86.3 $\pm$ 6.3 & 87.8 $\pm$ 6.1 & 71.2 $\pm$ 14.3 & 87.2 $\pm$ 7.1 & 92.3 & \textbf{95.5} $\pm$ \textbf{4.1} & 67.3 $\pm$ 8.1 \\
F-MNIST & .05 & 90.7 $\pm$ 4.9 & 85.5 $\pm$ 7.2 & 81.6 $\pm$ 9.6 & 80.6 $\pm$ 7.1 & 82.7 $\pm$ 7.8 & 71.9 $\pm$ 14.3 & 81.5 $\pm$ 8.5& 89.8  & \textbf{94.5} $\pm$ \textbf{4.5} & 59.8 $\pm$ 4.6 \\
& .10 & 89.3 $\pm$ 6.2 & 85.5 $\pm$ 7.7 & 77.4 $\pm$ 11.1 & 76.2 $\pm$ 7.3 & 79.8 $\pm$ 9.0 & 72.5 $\pm$ 15.5 & 78.2 $\pm$ 9.1 & 90.2  & \textbf{93.6} $\pm$ \textbf{4.7} & 56.7 $\pm$ 4.1 \\
& .20 & 88.1 $\pm$ 6.9 & 86.3 $\pm$ 7.4 & 72.5 $\pm$ 12.6 & 69.3 $\pm$ 6.3 & 74.3 $\pm$ 10.6 & 70.8 $\pm$ 16.0 & 74.8 $\pm$ 9.4 & 88.4 & \textbf{92.5} $\pm$ \textbf{4.9} & 53.9 $\pm$ 2.9 \\
\cmidrule(lr){1-1} \cmidrule(lr){2-2} \cmidrule(lr){3-3} \cmidrule(lr){4-4} \cmidrule(lr){5-5} \cmidrule(lr){6-6} \cmidrule(lr){7-7} \cmidrule(lr){8-8} \cmidrule(lr){9-9} 
\cmidrule(lr){10-10}
\cmidrule(lr){11-11} \cmidrule(lr){12-12}
& .00 & 63.8 $\pm$ 9.0 & 59.9 $\pm$ 6.7 & 56.2 $\pm$ 13.2 & 60.9 $\pm$ 9.4 & 73.3 $\pm$ 8.4 & 50.8 $\pm$ 4.7 & 77.9 $\pm$ 7.2 & 80.0  & \textbf{86.9} $\pm$ \textbf{6.8} & 63.5 $\pm$ 8.0 \\
& .01 & 63.8 $\pm$ 9.3 & 59.9 $\pm$ 6.7 & 56.2 $\pm$ 13.1 & 60.5 $\pm$ 9.4 & 72.8 $\pm$ 8.1 & 51.1 $\pm$ 4.7 & 76.5 $\pm$ 7.2 & 76.4  & \textbf{86.5} $\pm$ \textbf{6.9} & 62.9 $\pm$ 7.3 \\
CIFAR-10 & .05 & 62.6 $\pm$ 9.2 & 59.6 $\pm$ 6.4 & 55.7 $\pm$ 13.3 & 59.6 $\pm$ 9.8 & 71.0 $\pm$ 8.4 & 50.1 $\pm$ 2.9 & 74.0 $\pm$ 6.9 & 75.9 & \textbf{84.3} $\pm$ \textbf{7.4} & 62.2 $\pm$ 8.2 \\
& .10 & 62.9 $\pm$ 8.2 & 59.1 $\pm$ 6.6 & 55.4 $\pm$ 13.3 & 58.6 $\pm$ 10.0 & 69.3 $\pm$ 8.5 & 50.5 $\pm$ 3.6 & 71.8 $\pm$ 7.0 & 72.6 & \textbf{81.9} $\pm$ \textbf{7.7} & 60.6 $\pm$ 8.3 \\
& .20 & 61.9 $\pm$ 8.1 & 58.3 $\pm$ 6.2 & 54.6 $\pm$ 13.3 & 57.0 $\pm$ 10.6 & 67.9 $\pm$ 8.1 & 50.1 $\pm$ 1.7 & 68.5 $\pm$ 7.1 & 71.4  & \textbf{79.8} $\pm$ \textbf{8.8} & 58.5 $\pm$ 6.7 \\
\bottomrule
\end{tabular}}
\end{table*}

\renewcommand \arraystretch{1.0}
\begin{table*}[!htb]
\centering
\caption{Complete results of \textbf{experimental scenario (iii)}, where we increase the number of anomaly classes $k_{l}$ included in the labeled training data. We report the avg. AUC in \% with st. dev. computed over 100 experiments at various numbers $k_{l}$. Note that unsupervised methods~\cite{scholkopf2001estimating,liu2008isolation,masci2011stacked,ruff2018deep} cannot be applied to the semi-supervised setting when $k_l \not= 0$, while SS-DGM~\cite{kingma2014semi} and the supervised classifier are not compatible for the unsupervised setting when $k_l = 0$.}
\label{tal:3}
\tiny
\setlength{\tabcolsep}{1.0pt}{
\begin{tabular}{C{1.2cm}C{0.5cm}C{1.0cm}C{1.0cm}C{1.0cm}C{1.0cm}C{1.0cm}C{1.0cm}C{1.0cm}C{1.0cm}C{1.0cm}C{1.0cm}}
\toprule
Data & $k_l$ & \makecell[c]{OC-SVM \\Hybrid~\cite{scholkopf2001estimating}} & \makecell[c]{IF \\Hybrid~\cite{liu2008isolation}} & \makecell[c]{CAE\\\cite{masci2011stacked}} & \makecell[c]{Deep\\ SVDD~\cite{ruff2018deep}} & \makecell[c]{SSAD\\ Hybrid~\cite{gornitz2013toward}} & \makecell[c]{SS-DGM\\\cite{kingma2014semi}} & \makecell[c]{Deep\\ SAD~\cite{SAD}} & \makecell[c]{ESAD \\ (ours)} & \makecell[c]{Supervised\\ Classifier}\\
\cmidrule(lr){1-1} \cmidrule(lr){2-2} \cmidrule(lr){3-3} \cmidrule(lr){4-4} \cmidrule(lr){5-5} \cmidrule(lr){6-6} \cmidrule(lr){7-7} \cmidrule(lr){8-8} \cmidrule(lr){9-9} 
\cmidrule(lr){10-10} 
\cmidrule(lr){11-11} 
 & 0 & 91.4 $\pm$ 5.1 & 88.2 $\pm$ 6.5 & 83.7 $\pm$ 8.4 & 86.5 $\pm$ 6.8 & 91.4 $\pm$ 5.1 & & 86.5 $\pm$ 6.8 & \textbf{92.7} $\pm$ \textbf{3.8} & \\
 & 1 & & & & & 86.0 $\pm$ 4.6 & 89.7 $\pm$ 7.5 & 91.2 $\pm$ 4.9 & \textbf{97.8} $\pm$ \textbf{1.3} & 83.6 $\pm$ 8.2\\
MNIST  & 2 & & & & & 87.7 $\pm$ 3.8 & 92.8 $\pm$ 5.3 & 92.0 $\pm$ 3.6 & \textbf{98.2} $\pm$ \textbf{0.9} & 90.3 $\pm$ 4.6 \\
 & 3 & & & & & 89.8 $\pm$ 3.3 & 94.9 $\pm$ 4.2 & 94.7 $\pm$ 2.8 & \textbf{99.1} $\pm$ \textbf{0.6} & 93.9 $\pm$ 2.8 \\
 & 5 & & & & & 91.9 $\pm$ 3.0 & 96.7 $\pm$ 2.3 & 97.3 $\pm$ 1.8 & \textbf{99.3} $\pm$ \textbf{0.5} & 96.9 $\pm$ 1.7 \\
\cmidrule(lr){1-1} \cmidrule(lr){2-2} \cmidrule(lr){3-3} \cmidrule(lr){4-4} \cmidrule(lr){5-5} \cmidrule(lr){6-6} \cmidrule(lr){7-7} \cmidrule(lr){8-8} \cmidrule(lr){9-9} 
\cmidrule(lr){10-10}
\cmidrule(lr){11-11} 
 & 0 & 89.3 $\pm$ 6.2 & 85.5 $\pm$ 7.7 & 77.4 $\pm$ 11.1 & 76.2 $\pm$ 7.3 & 89.3 $\pm$ 6.2 & & 76.2 $\pm$ 7.3 & \textbf{91.2} $\pm$ \textbf{5.4} &\\
 & 1 & & & & & 79.8 $\pm$ 9.0 & 72.5 $\pm$ 15.5 & 78.2 $\pm$ 9.1 & \textbf{93.6} $\pm$ \textbf{4.7} & 56.7 $\pm$ 4.1 \\
F-MNIST  & 2 & & & & & 80.1 $\pm$ 10.5 & 74.3 $\pm$ 15.4 & 80.5 $\pm$ 8.2 & \textbf{94.7} $\pm$ \textbf{4.6} & 62.3 $\pm$ 2.9 \\
 & 3 & & & & & 83.8 $\pm$ 9.4 & 77.5 $\pm$ 14.7 & 83.9 $\pm$ 7.4 & \textbf{95.8} $\pm$ \textbf{4.8} & 67.3 $\pm$ 3.0 \\
 & 5 & & & & & 86.8 $\pm$ 7.7 & 79.9 $\pm$ 13.8 & 87.3 $\pm$ 6.4 & \textbf{96.7} $\pm$ \textbf{4.3} & 75.3 $\pm$ 2.7 \\
\cmidrule(lr){1-1} \cmidrule(lr){2-2} \cmidrule(lr){3-3} \cmidrule(lr){4-4} \cmidrule(lr){5-5} \cmidrule(lr){6-6} \cmidrule(lr){7-7} \cmidrule(lr){8-8} \cmidrule(lr){9-9} 
\cmidrule(lr){10-10}
\cmidrule(lr){11-11} 
 & 0 & 62.9 $\pm$ 8.2 & 59.1 $\pm$ 6.6 & 55.4 $\pm$ 13.3 & 86.6 $\pm$ 10.0 & 62.9 $\pm$ 8.2 & & 58.6 $\pm$ 10.0 & \textbf{73.5} $\pm$ \textbf{6.8} & \\
 & 1 & & & & & 69.3 $\pm$ 8.5 & 50.5 $\pm$ 3.6 & 71.8 $\pm$ 7.0 & \textbf{81.9} $\pm$ \textbf{7.7} & 60.6 $\pm$ 8.3 \\
CIFAR-10 & 2 & & & & & 72.3 $\pm$ 7.5 & 50.3 $\pm$ 2.4 & 75.2 $\pm$ 6.4 & \textbf{83.8} $\pm$ \textbf{6.0} & 61.0 $\pm$ 6.6 \\
 & 3 & & & &  & 73.3 $\pm$ 7.0 & 50.0 $\pm$ 0.7 & 77.5 $\pm$ 5.9 & \textbf{84.9} $\pm$ \textbf{8.1} & 62.7 $\pm$ 6.8 \\
 & 5 & & & & & 74.2 $\pm$ 6.5 & 50.0 $\pm$ 1.0 & 80.4 $\pm$ 4.6 & \textbf{86.7} $\pm$ \textbf{7.0} & 60.9 $\pm$ 4.6 \\
\bottomrule
\end{tabular}}
\end{table*}

\begin{figure*}[t]
\centering
\includegraphics[width=12cm]{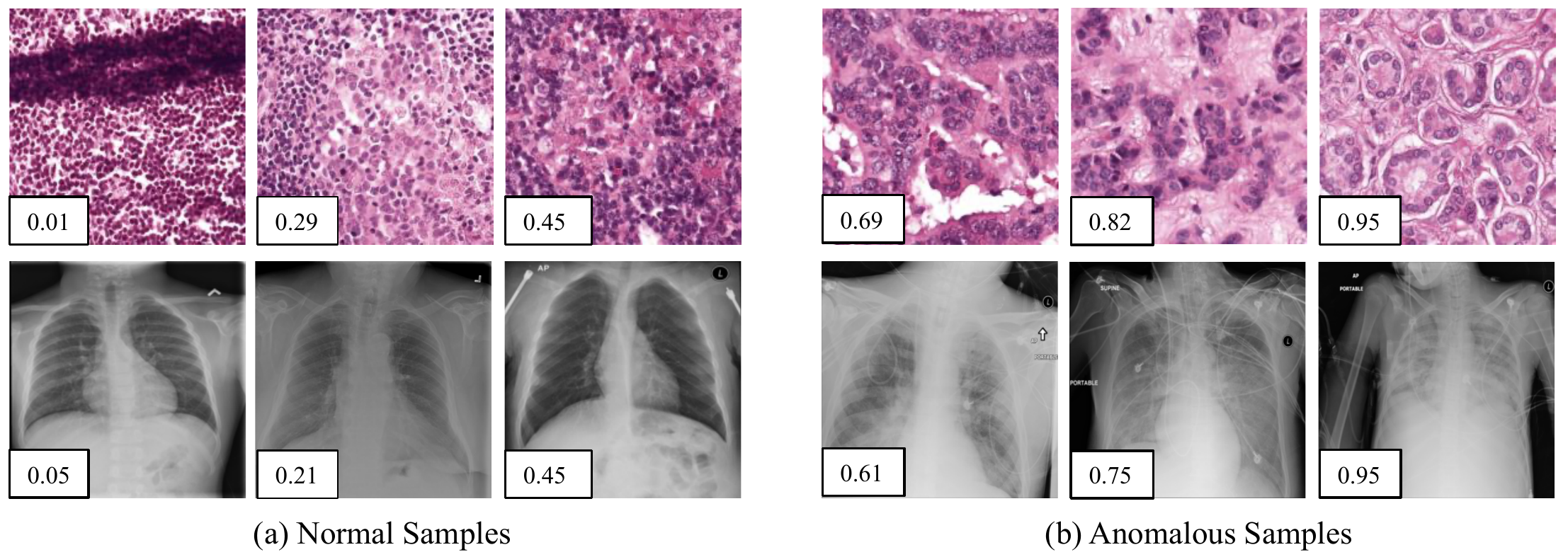}
\caption{Examples of normal (left) and anomalous (right) samples of H\&E-stained lymph node of Camelyon16 challenge \cite{bejnordi2017diagnostic} (top) and chest X-rays of NIH dataset \cite{wang2017chestx} (bottom). We show the predicted anomaly score by the proposed method. The higher the score, the more likely to be an anomaly. Best viewed in color.}
\label{img:vis_medical}
\end{figure*}

\begin{figure}[t]
\centering
\includegraphics[width=5cm]{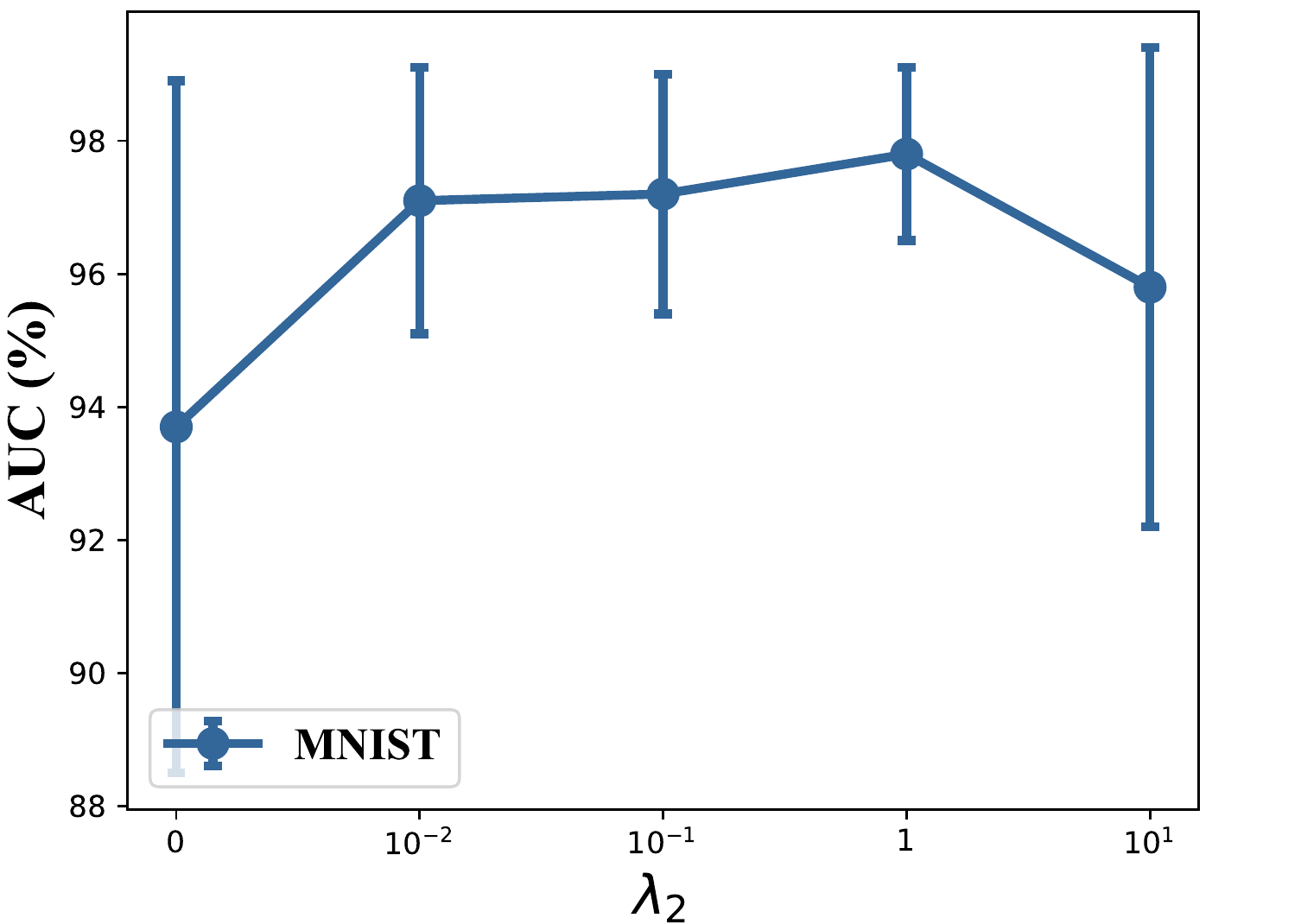}
\caption{ESAD sensitivity analysis w.r.t. $\lambda_1$ on MNIST. We report avg. AUC with st. dev. over 90 experiments for various values of hyperparameter $\lambda_2$. Best viewed in color.}
\label{img:lamda2}
\vspace{-5pt}
\end{figure}

To establish hybrid methods, we apply the OC-SVM, IF, and SSAD to the resulting bottleneck representations given by the respective converged autoencoders. To complete the full learning spectrum, we also include a fully supervised deep classifier trained on the binary cross-entropy loss.

\subsection{Supplementary Experimental Results}
Besides the experiments in the main paper, we examine three scenarios~\cite{SAD} in which we vary the following three experimental parameters:
(i) $\gamma_{l}$, the ratio of labeled samples in the training data; (ii) $\gamma_{p}$, the ratio of pollution, i.e., unknown anomalies, in the unlabeled training data, and (iii) the number of anomaly classes $k_{l}$ included in the labeled training data. 

Besides the baselines considering in the main paper, we further consider several shallow unsupervised methods and deep unsupervised anomaly detection competitors as baselines. For the shallow unsupervised methods, OC-SVM~\cite{scholkopf2001estimating} and Isolation Forest~\cite{liu2008isolation} are considered. For the deep unsupervised anomaly detection competitors, we consider CAE \cite{masci2011stacked}, Deep SVDD \cite{ruff2018deep} AnoGAN \cite{schlegl2017unsupervised}, ALOCC \cite{Sabokrou2018Adversarially}, ADGAN \cite{deecke2018anomaly}, OCGAN \cite{perera2019ocgan}, GANomaly \cite{akccay2019skip}, P-KDGAN \cite{zhang2020p} and DGEO \cite{golan2018deep}. 
OC-SVM here have unfair advantages by selecting their hyperparameters to maximize AUC on a subset $(10\%)$ of the test set to establish strong baselines.

\noindent\textbf{Experimental Scenario (i).} For the experimental scenario (i), where the effectiveness of adding labeled anomalies during training is investigated, i.e., increasing $\gamma_{l}$, has been shown in the main paper. In this part, we further report the results comparing with several unsupervised methods under the unsupervised setting in Table~\ref{tal:un}. We emphasize that our ESAD is not designed for the unsupervised setting. In these experiments, the semi-supervised terms are not working and make ESAD incomplete, since it remains only the unsupervised terms. Thus, these results are somewhat unfair for ESAD. Note that this paper still focus on the semi-supervised setting but not the unsupervised setting.

\noindent\textbf{Experimental Scenario (ii).} For the experimental scenario (ii), where the robustness is investigated in this scenario through adding polluted data. With an increasing pollution ratio $\gamma_{p}$, we pollute the unlabeled training set with anomalies drawn from all nine anomaly classes. We fix $\gamma_{l}=0.05$ in this scenario. We report the average results over 90 experiments per pollution ratio $\gamma_{p}$. The corresponding results are shown in Table~\ref{tal:2}. Results show that ESAD is least affected by the pollution data and show the best robustness in all the polluted levels.

\noindent\textbf{Experimental Scenario (iii).} For the experimental scenario (iii), we increase the number of anomaly classes $k_{l}$ included in the labeled part of the training set to increase the diversity of labeled anomalous data. As shown in Table~\ref{tal:3}, ESAD shows better performance in this scenario. For example, the AUC of ESAD on CIFAR-10 increases from 81.9\% to 86.7\% ($\gamma_{l}=0.05$, $\gamma_{p}=0.1$) when we change $k_{l}$ from 1 to 5.

\noindent \textbf{Examples Visualization.} We illustrate the predictions of our model in Figure~\ref{img:vis_medical}. Samples are randomly chosen from H\&E-stained lymph node of Camelyon16 challenge \cite{bejnordi2017diagnostic} (top) and chest X-rays of NIH dataset \cite{wang2017chestx} (bottom). These samples and their corresponding scores show that the higher the score, the more likely to be an anomaly.

\noindent \textbf{Sensitivity Analysis on $\lambda_2$.}
We analyze the sensitivity of ESAD over the hyperparameters $\lambda_2$. Figure~\ref{img:lamda2} shows the performance with different $\lambda_2$ using ESAD on MNIST. We set $\gamma_l=0.05$, $\gamma_p=0.1$, $k_l=1$ in this experiment. Results show that without the assistant loss, i.e., $\lambda_2 = 0$, ESAD shows relatively low and unstable AUCs. ESAD shows best performance when $\lambda_2 = 1$. When $\lambda_2$ is too large, ESAD also shows unstable performance. This is because both two encoders will converge into the same constant function if the impact of the assistant loss is much greater than the other two mutual information and entropy based loss functions.

\end{document}